%% file: PraC_tmlr.tex
\documentclass[10pt]{article} % For LaTeX2e
\usepackage[preprint]{tmlr}

\input{math_commands.tex}

\usepackage{hyperref}
\usepackage{url}
\usepackage{amsfonts} 
\usepackage{tcolorbox}
\usepackage{nicefrac}       % compact symbols for 1/2, etc.
\usepackage{microtype}      % microtypography
\usepackage{xcolor}         % colors
\usepackage{amsmath}
\usepackage{amssymb}
\usepackage{mathtools}
\usepackage{tabularray}
\usepackage{amsthm}
\usepackage{threeparttable}
\usepackage{multirow}
\usepackage{rotating}
\usepackage{booktabs}
\usepackage{tabularx}
\usepackage{array}

\newcolumntype{Y}{>{\centering\arraybackslash}X}

\theoremstyle{plain}
\newtheorem{theorem}{Theorem}[section]

\newtheorem{lemma}[theorem]{Lemma}

\theoremstyle{definition}
\newtheorem{definition}[theorem]{Definition}

\theoremstyle{remark}

\newcommand{\ie}{\textit{i.e.}}
\newcommand{\eg}{\textit{e.g.}}

\title{Pragmatic Curiosity: A Unified Framework for Hybrid \\Learning and Optimization via Active Inference}

\author{\name Yingke Li \email yingkeli@mit.edu \\
      \addr Department of Aeronautics and Astronautics\\
      Massachusetts Institute of Technology
      \AND
      \name Anjali Parashar \email anjalip@mit.edu \\
      \addr Department of Aeronautics and Astronautics\\
      Massachusetts Institute of Technology
      \AND
      \name Enlu Zhou \email enlu.zhou@isye.gatech.edu\\
      \addr School of Industrial and Systems Engineering \\
      Georgia Institute of Technology
      \AND
      \name Chuchu Fan \email chuchu@mit.edu \\
      \addr Department of Aeronautics and Astronautics\\
      Massachusetts Institute of Technology
}

\def\month{MM}  % Insert correct month for camera-ready version
\def\year{YYYY} % Insert correct year for camera-ready version
\def\openreview{\url{https://openreview.net/forum?id=XXXX}} % Insert correct link to OpenReview for camera-ready version

\begin{document}

\maketitle

\begin{abstract}
Many engineering and scientific workflows rely on expensive black-box evaluations, requiring sequential decisions that must both improve task performance and reduce uncertainty. 
Bayesian optimization (BO) and Bayesian experimental design (BED) provide powerful but largely separate treatments of goal-directed optimization and information-seeking experimentation, leaving limited guidance for hybrid settings in which learning and optimization are intrinsically coupled. 
We propose \textbf{Pragmatic Curiosity (PraC)}, a unified framework for hybrid learning and optimization via active inference. 
PraC evaluates candidate queries by trading information gain about a task-relevant latent symbol against an expected regret-based potential over outcomes. 
This formulation exposes three operational design choices: which latent quantity should be clarified, how task value is encoded as regret, and how strongly information gain should be exchanged against pragmatic value. 
We instantiate PraC across three regimes of increasing complexity: decision-oriented plume monitoring with fixed global symbols and known downstream losses, targeted active search with induced local symbols and evolving coverage goals, and composite Bayesian optimization with hierarchical regret learning under unknown preferences. 
Across these regimes, PraC reduces downstream decision risk, improves coverage of critical outcome regions, and jointly learns predictive and preference structures without relying on task-specific staging rules. 
\end{abstract}

\section{Introduction}

Sequential decision-making under uncertainty often requires an agent to decide what to learn and what to do at the same time. 
Two classical paradigms have addressed these aspects from different directions. 
Bayesian optimization (BO) is primarily \textit{goal-directed}: it selects queries in order to identify high-performing inputs of an unknown objective function \citep{mockus1975bayesian,jones1998efficient,srinivas2009gaussian}. 
Bayesian experimental design (BED) is primarily \textit{information-seeking}: it selects experiments in order to reduce uncertainty about latent quantities of interest \citep{chaloner1995bayesian}. 
Although both can be viewed as forms of adaptive sampling \citep{di2023active}, they are usually developed under different directives, with different acquisition rules and different criteria for success.

This separation leaves limited guidance for problems in which the relevant uncertainty, the downstream objective, and the sampling policy must be designed jointly.
In environmental monitoring \citep{KonakovicLukovic2020Diversity-guidedEvaluations}, a robot may need to learn the hidden state of a chemical plume while deciding where to dispatch a response team. 
In failure discovery \citep{Ramanagopal2018FailingCars, parashar2025cost}, an evaluator may need to explore uncertain scenarios while prioritizing outcomes that cover critical failure regions. 
In preference-guided design \citep{gonzalez2025implementation, coelho2025composite}, an optimizer may need to learn not only how actions produce outcomes, but also how those outcomes should be valued. 
In all these settings, an agent must act before it fully knows, and it only learns by acting. 
The bottleneck is therefore not learning or optimization alone, but the \emph{meta-decision} that connects them: deciding what knowledge is sufficient for action, and what actions are most informative for future decisions.

Recent work has begun to address this coupling by choosing between specialized tools and accommodating problem-specific adaptation through leveraging information-gain criteria to enhance optimization and vice versa. 
Information-directed sampling (IDS) connects information and regret in online optimization and bandit settings \citep{Russo2018LearningSampling}. 
Self-correcting Bayesian optimization introduces a statistical distance-based active learning criterion into the BO loop to improve model learning during optimization \citep{Hvarfner2023Self-CorrectingLearningb}. 
Expected predictive information gain (EPIG) focuses active learning on predictive information that is relevant to a target distribution \citep{Smith2023Prediction-OrientedLearning}. 
Preference exploration for BO learns preference models over multi-objective outcomes, but typically relies on stage-wise choices between experimentation and preference learning \citep{lin2022preference}. 
These methods highlight growing synergy between learning and optimization, but they often remain task-specific, and rarely generalize across categories.

In this paper, we propose \textbf{Pragmatic Curiosity (PraC)}, a unified framework for hybrid learning and optimization via active inference (AIF) \citep{Friston2010TheTheory, Friston2017ActiveTheory}. 
Originally developed in computational neuroscience, AIF prescribes action selection by minimizing \textit{expected free energy} (EFE), a single objective that couples an \textit{epistemic} drive for information gain with an \textit{pragmatic} drive toward preferred outcomes. 
PraC translates this principle into a practical acquisition-design template: a candidate query is evaluated by both what it is expected to reveal about a task-relevant latent quantity and how its predicted outcome is expected to affect downstream action. 
In this sense, PraC formalizes \textit{pragmatic curiosity}: curiosity is not the indiscriminate gathering of information, but the search for knowledge insofar as that knowledge can improve action.

PraC not only serves as a unifying umbrella principle that re-interprets many classical acquisition rules in BO and BED, but also opens the door to practical acquisition design for a broader class of hybrid problems by exposing a \textit{triad} of meta-decisions in such problems. 
First, the agent must decide what hidden structure is worth representing, since not all uncertainty is relevant to action. 
Second, it must decide how outcomes should be evaluated, since different downstream goals induce different notions of success, failure, risk, or shortfall. 
Third, it must decide how strongly unresolved uncertainty should influence the next action, since acting before knowing can be brittle while learning without regard to the goal can become aimless.  
Thus, hybrid decision-making is governed by a triad of meta-decisions in \textit{representation}, \textit{evaluation}, and \textit{regulation}.

In PraC, this triad is conceptualized as \textit{symbols}, \textit{regrets}, and \textit{curiosity}. 
Symbols specify the latent quantities whose clarification can change what the agent ought to do. 
In parametric models, these may be fixed global symbols such as physical parameters or latent hypotheses; in non-parametric models, they may be induced local symbols tied to contemplated observations. 
Regret acts as a carrier of goal: it gives an operational representation of what matters by quantifying how an outcome falls short of what is desired. 
When the goal is known, a regret-based potential can be designed directly to encode domain knowledge; when the goal is unknown or implicit, the regret itself can be learned through a hierarchical structure of PraC. 
Curiosity specifies the exchange rate between clarifying knowledge and pursuing the current surrogate of the goal. 
Together, these three choices turn PraC from an abstract decision paradigm into a practical decision rule that accounts for diverse hybrid learning and optimization problems.

This perspective also elevates curiosity from an ad-hoc exploration heuristic into an intrinsic regularizer that couples belief updating and decision-making.
In many acquisition functions, the exploration weight appears as a fixed hyperparameter selected by empirical tuning. 
In PraC, curiosity has a semantic role: it determines how much pragmatic sacrifice is justified by epistemic clarification. 
This allows the effective epistemic pressure of the acquisition to be monitored and scheduled dynamically. 
We introduce a practical feedforward--feedback scheduler that calibrates curiosity from the relative scale of information gain and downstream pragmatic value, while suppressing curiosity as decision-relevant uncertainty decreases. 
In this way, curiosity becomes a fluid exchange rate that self-regulates as belief and action co-evolve.

We instantiate PraC across three regimes of increasing complexity and empirically evaluate the role of each design axis. 
First, we study fixed global symbols with known downstream goals in \textit{decision-oriented monitoring}. 
Here the latent symbol is a finite hypothesis, and the regret is the Bayes risk of a downstream response decision. 
PraC with dynamic curiosity scheduler achieves the lowest final Bayes risk in source response localization and active source prioritization, and obtains the best realized response loss in consequence-weighted dispatch, showing that curiosity can help resolve symbolic distinctions that matter for action rather than merely reducing uncertainty for its own sake.

Second, we study induced local symbols with known evolving goals in \textit{targeted active search}. 
Here the relevant symbol is induced by a non-parametric model, and the pragmatic value of a new outcome depends on what has already been covered. 
PraC coordinates exploration in the input space with coverage in the outcome space, and in the most challenging target set discovers nearly 10\% more normalized target-region coverage than the strongest baseline \citep{malkomes2021beyond}. 
This demonstrates the benefit of allowing symbols to be induced locally by contemplated observations rather than fixed globally in advance.

Third, we study hierarchical symbols with unknown goals in \textit{composite Bayesian optimization} with preferences. 
Here the agent must learn both how actions produce outcomes and how outcomes should be valued. 
PraC consistently outperforms ablated variants that remove components of the hierarchical acquisition, and further improves over BOPE-style stage-wise preference-exploration variants \citep{lin2022preference}. 
These results show that a hierarchical regret design can jointly support outcome learning and preference learning without manually separating them into stages.

Our contributions are as follows. 
\begin{itemize}
    \item We derive a practical active-inference acquisition template that couples information gain about a task-relevant latent symbol with an expected regret-based potential over outcomes.
    \item We identify three operational design axes---epistemic symbol, regret potential, and curiosity coefficient---and show how they recover, reinterpret, or approximate several BO/BED acquisition strategies while supporting new hybrid designs.
    \item We introduce a dynamic curiosity scheduler that calibrates the information--regret exchange rate from candidate-level acquisition statistics and posterior uncertainty.
    \item We demonstrate the framework across decision-oriented environmental monitoring, targeted active search, and preference-guided composite optimization, showing that the same principle adapts across parametric, non-parametric, and hierarchical regimes.
\end{itemize}

\section{Preliminaries}

\subsection{Bayesian Optimization}
Given an unknown objective function $f:\mathcal{X} \to \mathbb{R}$, BO seeks to identify the input $x^{\ast}$ that maximizes the objective $f$ over an admissible set of queries $\mathcal{X}$, \ie, 
$
    x^{\ast} = \arg \max_{x \in \mathcal{X}} f(x).
$
To achieve this goal, BO relies on a \textit{surrogate model} that provides a probabilistic representation of the objective $f$, and uses this information to compute an \textit{acquisition function} to drive the selection of the most promising sample to query.

\textbf{Surrogate model.} We assume the available information regarding the objective function $f$ be stored in the dataset $\mathcal{D}_{t}:=\{(x_{1},y_{1}), \dots, (x_{t},y_{t})\}$, where $y_{t} \sim \mathcal{N}(f(x_{t}),\sigma^{2}(x_{t}))$ is the noisy observation of the objective function by assuming the noise follows a zero-mean normal distribution with a standard deviation $\sigma(x)$. 
The surrogate model depicts possible explanations of $f$ as $f(x) \sim p(f(x)|\mathcal{D}_{t})$ applying a joint distribution over its behavior at each sample $x \in \mathcal{X}$.
In Bayesian inference, the prior distribution of the objective $p(f(x))$ is combined with the likelihood function $p(\mathcal{D}_{t}|f(x))$ to compute the posterior distribution $p(f(x)|\mathcal{D}_{t})\propto p(\mathcal{D}_{t}|f(x))p(f(x))$, representing the updated beliefs about $f(x)$. 
Typically, Gaussian processes (GPs) have been widely used as the surrogate model for BO due to their efficient posterior sampling that enables cheap, gradient-based optimization of the acquisition function to propose new query points.
GP is specified by a joint normal distribution $p(f(x)|\mathcal{D}_{t})=\mathcal{N}(\mu_{t}(x),\kappa_{t}(x, x'))$ with mean $\mu_{t}(x)$ and kernel function $\kappa_{t}(x, x')$, where $\mu_{t}(x)$ represents the prediction and $\kappa_{t}(x, x')$ the associated uncertainty.

\textbf{Acquisition function.} The surrogate model is utilized to decide the next sample $x_{t+1}\in \mathcal{X}$ through the maximization of an acquisition function $\alpha:\mathcal{X} \to \mathbb{R}$, \ie, $x_{t+1}=\arg \max_{x\in \mathcal{X}}\alpha(x\mid \mathcal{D}_{t})$, where $\alpha(x\mid \mathcal{D}_{t})$ provides a measure of the improvement that the next query is likely to provide with respect to (w.r.t.) the current surrogate model of the objective function. 
Many acquisition functions have been proposed, including \textit{probability of improvement} \citep{Mockus1975OnExtremum}, \textit{expected improvement} \citep{Jones1998EfficientFunctions}, \textit{upper confidence bound}, and various \textit{entropy search} methods \citep{Hennig2012EntropyOptimization, Hernandez-Lobato2014PredictiveFunctions, Wang2017Max-valueOptimization, Hvarfner2022JointOptimization, Neiswanger2021BayesianInformation},
as well as practical approaches to optimize them \citep{Wilson2018MaximizingOptimization}.

\subsection{Bayesian Experimental Design}

Rather than optimizing an objective function $f(x)$, the purpose of BED is to sequentially select a set of experimental designs $x \in \mathcal{X}$ and gather outcomes $y$, to maximize the amount of information obtained about certain \textit{parameters of interest}, denoted as $\theta$. 
The parameters $\theta$ can correspond to some explicit model parameters, or any implicitly defined quantity of interest (\eg, the optimum of a function, the output of an algorithm, or future downstream predictions).

Based on the current history of experiments $\mathcal{D}_{t}:=\{(x_{1},y_{1}), \dots, (x_{t},y_{t})\}$, BED seeks to find the next experimental design $x_{t+1}$ by maximizing the \textit{expected information gain} (EIG) \citep{Chaloner1995BayesianReview} that a potential experimental outcome $y_{t+1}$ can provide about $\theta$, measured in terms of expected entropy reduction of the posterior distribution of $\theta$:
\begin{equation*}
\begin{split}
    \text{EIG}(x\mid \mathcal{D}_{t})&=H[p(\theta|\mathcal{D}_{t})] - \mathbb{E}_{p(y\mid x, \mathcal{D}_{t})}[H[p(\theta|\mathcal{D}_{t}\cup(x,y))]]
    =I(\theta;(x,y)\mid \mathcal{D}_{t}),
\end{split}
\end{equation*}
where $H(\cdot)$ and $I(\cdot)$ denote the entropy and mutual information, respectively.

\section{Pragmatic Curiosity: A Framework that Synthesizes Learning and Optimization}

Acquisition strategies in BO typically induce \textit{goal-directed} behavior, where the implicit goal is to identify the optimum of an unknown objective function. 
By contrast, acquisition strategies in BED induce \textit{information-seeking} behavior, aiming to collect the most informative data about latent quantities of interest. 
Although both can be viewed as instances of adaptive sampling \citep{di2023active}, they are usually developed under different directives and therefore admit few directly transferable design principles \citep{hvarfner2025informed}. 
In this section, we synthesize these two seemingly competing imperatives within a unified framework through the lens of active inference (AIF).

\subsection{Active Inference as Expected Free Energy Minimization}

Consider a probabilistic surrogate model $q(\cdot)$ that captures the relationship between a decision variable $x$, an outcome $y$, and a collection of latent quantities of interest $s$, and factorizes as
\begin{equation*}
    q(x,y,s)\coloneq p(x,y\mid s)\,q(s),
\end{equation*}
where $q(s)$ is the surrogate belief over $s$. 
We use $q(\cdot)$ to distinguish the agent's internal surrogate model from the underlying, generally inaccessible, true data-generating model $p(\cdot)$. 

In AIF, preferences over possible outcomes are encoded through a probability distribution $p(y)$. 
Outcomes assigned higher probability are treated as more preferred. 
The deviation between an observed outcome and those preferred by the agent is measured by its \textit{self-information}, or \textit{surprisal}, $-\log p(y)$. 
Intuitively, surprisal quantifies how unexpected an outcome $y$ is under the preference distribution $p(y)$: less preferred outcomes incur larger surprisal.

The surprisal associated with an outcome $y$ satisfies
\begin{equation}\label{eqn:VFE}
\begin{split}
    -\log p(y)
    &= -\log \int p(y,s)\,ds
    = -\log \int \frac{p(y,s)q(s)}{q(s)}\,ds \\
    &= -\log \mathbb{E}_{q(s)}\!\left[\frac{p(y,s)}{q(s)}\right]
    \le -\mathbb{E}_{q(s)}\!\left[\log \frac{p(y,s)}{q(s)}\right]
    = F,
\end{split}
\end{equation}
where the inequality follows from \textit{Jensen's inequality}.

The right-hand side of \eqref{eqn:VFE} is called the \textit{variational free energy} (VFE), resembling the \textit{Helmholtz free energy} in physics.
It upper bounds surprisal and therefore provides a tractable surrogate objective for inference. 
In machine learning, the sign of VFE is often reversed, yielding the evidence lower bound (ELBO), whose maximization is a standard variational learning principle \citep{Titsias2009VariationalProcesses}.

To obtain a decision rule, we must account not only for realized outcomes, but also for future outcomes that have not yet been observed. 
AIF does so by considering the expected surprisal of future outcomes under the predictive distribution $q(y\mid x)$:
\begin{equation}\label{eqn:EFE}
\begin{split}
    -\mathbb{E}_{q(y\mid x)}&\log p(y\mid x)
    \le -\mathbb{E}_{q(y,s\mid x)} \left[ \log \frac{p(y,s\mid x)}{q(s\mid x)} \right] 
    =-\mathbb{E}_{q(y,s\mid x)} \left[ \log \frac{p(s\mid x,y) p(y\mid x)}{q(s\mid x)} \right] \\
    =-&\mathbb{E}_{q(y,s\mid x)} \left[ \log p(s\mid x,y) - \log q(s\mid x) \right] 
    - \mathbb{E}_{q(y,s\mid x)} \log p(y\mid x) \\
    =-&\mathbb{E}_{q(y,s\mid x)} \left[ \log p(s\mid x,y) - \log q(s\mid x) \right] 
    -\mathbb{E}_{q(y\mid x)} \log p(y\mid x) = G,
\end{split}
\end{equation}
where the right-hand side of \eqref{eqn:EFE} is denoted as the \textit{expected free energy} (EFE). 
AIF prescribes action selection by minimizing this quantity.

\subsection{A Principled Decision Paradigm for Hybrid Learning and Optimization}

In its original form, the EFE in \eqref{eqn:EFE} is not yet an actionable acquisition rule, because it depends on the true posterior $p(s\mid x,y)$ and the true outcome model $p(y\mid x)$, both of which are generally unavailable \textit{a priori}. 
We now turn it into a principled decision paradigm through two approximations that are natural from a decision-theoretic perspective.

The EFE in \eqref{eqn:EFE} can be decomposed as
\begin{equation*}
    G
    =
    \underbrace{-\mathbb{E}_{q(y,s\mid x)}\left[\log p(s\mid x,y)-\log q(s\mid x)\right]}_{\text{Term 1}}
    \;
    \underbrace{-\mathbb{E}_{q(y\mid x)}\log p(y\mid x)}_{\text{Term 2}}.
\end{equation*}

For Term 1, add and subtract $\log q(s\mid x,y)$ inside the expectation:
\begin{equation*}
\begin{split}
    \text{Term 1} &= -\mathbb{E}_{q(y,s\mid x)} \left[ \log p(s\mid x,y) - \log q(s\mid x,y) + \log q(s\mid x,y) - \log q(s\mid x) \right] \\
    &= -\mathbb{E}_{q(y\mid x)} \mathbb{E}_{q(s\mid x,y)} \left[ \log p(s\mid x,y) - \log q(s\mid x,y) + \log q(s\mid x,y) - \log q(s\mid x) \right] \\
    &= -\mathbb{E}_{q(y\mid x)} \left[ \mathbb{E}_{q(s\mid x,y)} \left[ \log p(s\mid x,y) - \log q(s\mid x,y) \right] + \mathbb{E}_{q(s\mid x,y)} \left[ \log q(s\mid x,y) - \log q(s\mid x) \right] \right] \\
    &= -\mathbb{E}_{q(y\mid x)} \left[\underbrace{- D_{\text{KL}}(q(s\mid x,y)\Vert p(s\mid x,y))}_{\text{Term 3}} + \underbrace{D_{\text{KL}}(q(s\mid x,y)\Vert q(s\mid x))}_{\text{Term 4}}  \right].
\end{split}
\end{equation*}

Since the surrogate belief over $s$ is not updated until an observation is obtained, it does not depend on $x$ alone; hence $q(s\mid x)=q(s)$. 
Therefore, Term 4 becomes $D_{\text{KL}}(q(s\mid x,y)\Vert q(s))$, which is precisely the \textit{epistemic value}: the information gained about the latent quantity $s$ from a prospective observation $(x,y)$.

By contrast, Term 3 remains intractable because it depends on the true posterior $p(s\mid x,y)$. 
The best the agent can access is its current surrogate belief constructed from the historical data $\mathcal{D}_t$, namely $q(\cdot)=p(\cdot\mid\mathcal{D}_t)$. 
This motivates the \textit{first approximation}: we treat the surrogate belief as the best currently available approximation to the true model, \ie, $q(s) \approx p(s)$. If both the surrogate and true models update according to \textit{Bayes' rule},
$
    q(s\mid x,y)= \frac{p(x,y\mid s)q(s)}{\int p(x,y\mid s)q(s) ds}, \quad
    p(s\mid x,y)= \frac{p(x,y\mid s)p(s)}{\int p(x,y\mid s)p(s) ds},
$
then Term 3 vanishes under this approximation, and the resulting decision rule is Bayes-optimal with respect to the agent's current belief. 
This is the only legitimate basis for action: a decision-maker cannot optimize with respect to an inaccessible ground truth, but only with respect to its present posterior approximation. 
Moreover, under standard consistency conditions, the discrepancy between surrogate and truth shrinks as more data are collected.

The second obstacle lies in Term 2, which depends on the true predictive distribution $p(y\mid x)$. 
We therefore introduce a \textit{second approximation}: we replace $p(y\mid x)$ by a preference distribution $p_{\text{pref}}(y)$ that does not depend on $x$. 
This converts Term 2 into $- \mathbb{E}_{q(y\mid x)} \log p_{\text{pref}}(y)$, which evaluates preferred outcomes under the predictive model induced by choosing $x$. 
Intuitively, the agent now asks: if I take action $x$, what outcomes am I likely to see, and how desirable are those outcomes?

To define a preference distribution from a task-level notion of desirability, we introduce a Boltzmann operator $\mathcal{B}_\beta$ that maps a nonnegative potential energy function $h:\mathcal{Z}\to\mathbb{R}_{\ge 0}$ into a probability distribution:
\begin{equation*}
    (\mathcal{B}_{\beta}[h])(z)\coloneq\frac{e^{-h(z)/\beta}}{\int_{\mathcal{Z}}e^{-h(z)/\beta}dz}.
\end{equation*}
This construction is inspired by the Boltzmann (or Gibbs) distribution in statistical mechanics. 
The parameter $\beta$ plays the role of temperature: larger $\beta$ yields a flatter, higher-entropy distribution, whereas smaller $\beta$ concentrates more sharply around the minimizers of $h$.

Applying this operator to a possibly time-varying potential energy function $h(y\mid\mathcal{D}_t)$ gives a preference distribution over outcomes, and hence
\begin{equation*}
\begin{split}
    G&\approx -I(s;(x,y)\mid \mathcal{D}_{t}) - \mathbb{E}_{q(y\mid x, \mathcal{D}_{t})}[\log{e^{-h(y\mid \mathcal{D}_{t})/\beta}}] + Z \\
    &=-I(s;(x,y)\mid \mathcal{D}_{t}) + \frac{1}{\beta} \mathbb{E}_{q(y\mid x, \mathcal{D}_{t})}[h(y\mid \mathcal{D}_{t})] + Z,
\end{split}
\end{equation*}
where $q(y\mid x,\mathcal{D}_{t})$ is the predictive distribution of a surrogate model constructed from the historical data $\mathcal{D}_{t}$, and $Z=\log \int_{\mathcal{Y}} e^{-h(y\mid \mathcal{D}_{t})/\beta} dy$ is a normalization constant independent of $x$.

This leads to our proposed decision principle:
\begin{equation*}
    x_{t+1}=\arg \max_{x\in \mathcal{X}}\{ \beta_{t} \underbrace{I(s;(x,y)\mid \mathcal{D}_{t})}_{\text{epistemic}} - \underbrace{\mathbb{E}_{q(y\mid x, \mathcal{D}_{t})}[h(y\mid \mathcal{D}_{t})]\}}_{\text{pragmatic}}.
\end{equation*}
Here, the conditional mutual information $I(s;(x,y)\mid\mathcal{D}_t)$ captures the \textit{epistemic} drive to reduce uncertainty about latent quantities of interest; the expected potential $\mathbb{E}[h(y\mid\mathcal{D}_t)]$ captures the \textit{pragmatic} drive to avoid undesirable outcomes; and $\beta_t\ge 0$ controls the trade-off between the two.

By construction, this decision rule balances \textit{information-seeking} and \textit{goal-directed} behavior within a single objective. 
Rather than treating exploration and exploitation as competing heuristics, it expresses them as two inseparable aspects of one decision criterion: the agent seeks information insofar to improve outcome-sensitive action. In other words, it demonstrates a \textbf{pragmatic curiosity (PraC)}.

\subsection{A Unified View of Acquisition Strategies in BO and BED}

PraC serves as a unifying umbrella principle for both learning-oriented and optimization-oriented acquisition strategies. 
By varying the epistemic target $s$ and the pragmatic potential $h_t(y)$, many classical acquisition rules from BO and BED can be recovered as special cases or close approximations, as summarized in Table~\ref{tab: unified acquisition strategies}.

\begin{table*}
    \centering
    \caption{A unified view of different acquisition strategies in BO and BED, where $x^{\ast}$, $y^{\ast}$ represent the true optimal solution and value, respectively, and $\hat{y}$ is the best value observed in $\mathcal{D}_{t}$.}
    \label{tab: unified acquisition strategies}
    \begin{tblr}{colspec={|X[5.8,l]|X[2.2,c]|X[0.9,c]|X[1.1,c]|}, rowspec={Q[m]Q[m]Q[m]Q[m]Q[m]Q[m]Q[m]Q[m]Q[m]Q[m]}}
    \hline 
    \SetCell[r=2]{l} Acquisition Strategy & \SetCell[r=2]{l} Acquisition Function & \SetCell[c=2]{c} Pragmatic Curiosity \\
    \hline 
    & & $s$ & $h_t(y)$ \\
    \hline 
    Expected Information Gain \citep{Chaloner1995BayesianReview} & $I(\theta;(x,y)\mid \mathcal{D}_{t})$ & $\theta$ & - \\
    \hline 
    Entropy Search  \citep{Hennig2012EntropyOptimization, Hernandez-Lobato2014PredictiveFunctions} & $I(x^{*};(x,y)\mid \mathcal{D}_{t})$ & $x^{*}$ & - \\
    \hline
    Max-value Entropy Search  \citep{Wang2017Max-valueOptimization} & $I(y^{*};(x,y)\mid \mathcal{D}_{t})$ & $y^{*}$ & - \\
    \hline
    Joint Entropy Search  \citep{Hvarfner2022JointOptimization} & $I((x^{*}, y^{*});(x,y)\mid \mathcal{D}_{t})$ & $(x^{*}, y^{*})$ & - \\
    \hline
    Bayesian Algorithm Execution  \citep{Neiswanger2021BayesianInformation} & $I(\mathcal{O}_{\mathcal{A}}(f);(x,y)\mid \mathcal{D}_{t})$ & $\mathcal{O}_{\mathcal{A}}(f)$ & - \\
    \hline
    GP-Upper Confidence Bound \footnotemark[1] \citep{Srinivas2009GaussianDesign}  & $\mu_{t}(x)+\beta^{1/2}\sigma_{t}(x)$ & $f_{\mathcal{X}}$ & $-y$ \\
    \hline 
    Probability of Improvement  \citep{Mockus1975OnExtremum} & $p(y\ge\hat{y})$ & - & $-\mathbb{I}(y \ge \hat{y})$ \\
    \hline
    Expected Improvement  \citep{Jones1998EfficientFunctions} & $\mathbb{E}([y-\hat{y}]_{+})$ & - & $-[y-\hat{y}]_{+}$ \\
    \hline
\end{tblr}
\end{table*}

\footnotetext[1]{Strictly speaking, the correspondence for GP-UCB is approximate rather than exact, because GP-UCB uses a first-order standard-deviation term rather than a second-order variance term. Detailed derivations are provided in Appendix~\ref{appx:GP-UCB}. Still, the connection reveals the information-theoretic structure underlying this heuristic form.}

More importantly, this unifying principle opens the door to acquisition design for a broader class of \textit{hybrid} learning-and-optimization problems that go beyond classical BO and BED. 
Such problems are typically governed by three coupled meta-decisions: (i) \textit{what the agent seeks to know}, (ii) \textit{what it seeks to achieve}, and (iii) \textit{how it trades off those two}. 
In the next section, we formalize these three design axes through \textit{symbols}, \textit{regrets}, and \textit{curiosity}.

\section{The Triad of Meta-Decisions: Symbols, Regrets, and Curiosity}

One central contribution of PraC is that it foregrounds three meta-decisions underlying any hybrid learning-and-optimization problem. 
First, the agent must decide which latent quantity should be represented and clarified; this is implied by the choice of $s$. 
Second, it must decide how outcomes should be evaluated; this is encode through the choice of $h_t$. 
Third, it must decide how strongly unresolved uncertainty should influence action; this corresponds to the choice of $\beta_t$. 

These three choices are the critical operational levers through which an abstract decision paradigm becomes a practical decision rule.
This section develops those three choices both conceptually and operationally. 
We formalize \textit{symbols} to specify the latent quantities whose clarification can change what the agent ought to do, \textit{regrets}  to provide an operational loss landscape over outcomes, and \textit{curiosity} to regulate the exchange rate between clarifying knowledge and pursuing goals.

\subsection{Symbols and Models}\label{subsec:symbols}

\textit{``All models are wrong, but some are useful.''}
A model becomes useful when it represents the distinctions whose clarification can change what we should do.
In PraC, we call these distinctions \textit{symbols}. 

Different modeling regimes induce different forms of symbols.

\paragraph{Parametric models.}
In parametric models, symbols are typically \textit{fixed global symbols}: the latent state is represented by a finite-dimensional parameter or state variable whose components retain persistent semantic identities across decisions. 
Examples include unknown physical coefficients, latent modes, transition parameters, or class labels. 
In this regime, the epistemic objective is to reduce uncertainty about these stable hidden meanings.

\paragraph{Non-parametric models.}
In non-parametric models, however, there may be no finite set of globally privileged coordinates. 
Instead, the relevant symbols are often \textit{induced local symbols}, created by the contemplated observations themselves. 
This is formalized by the following representation-compression property.

\begin{lemma}[\textbf{Representation Compression}]\label{lemma:mutual info}
Let $\mathbf{X}\subseteq \mathcal{X}$ be a subset of inputs, and let $f_{\mathbf{X}}$ denote the corresponding function values. 
Given a historical dataset $\mathcal{D}_t$ and new measurements $\mathbf{Y}$ observed at $\mathbf{X}$, then
$$I(f_{\mathcal{X}};(\mathbf{X},\mathbf{Y})\mid\mathcal{D}_t)=I(f_{\mathbf{X}};\mathbf{Y}\mid\mathcal{D}_t),$$
for any (finite or infinite) set $\mathcal{X}$.
\end{lemma}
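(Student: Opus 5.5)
The approach is the chain rule for mutual information combined with a conditional independence (Markov) property of the observation model. Since the query locations $\mathbf{X}$ are chosen by the agent, they carry no information on their own. Conditioning on $\mathbf{X}$, we therefore have $I(f_{\mathcal{X}};(\mathbf{X},\mathbf{Y})\mid\mathcal{D}_t)=I(f_{\mathcal{X}};\mathbf{Y}\mid\mathbf{X},\mathcal{D}_t)$, which is written $I(f_{\mathcal{X}};\mathbf{Y}\mid\mathcal{D}_t)$ with $\mathbf{X}$ fixed. Next, I split the full function into the queried values and the rest, $f_{\mathcal{X}}=(f_{\mathbf{X}},f_{\mathcal{X}\setminus\mathbf{X}})$, and apply the chain rule:
\begin{equation*}
I(f_{\mathcal{X}};\mathbf{Y}\mid\mathcal{D}_t)=I(f_{\mathbf{X}};\mathbf{Y}\mid\mathcal{D}_t)+I(f_{\mathcal{X}\setminus\mathbf{X}};\mathbf{Y}\mid f_{\mathbf{X}},\mathcal{D}_t).
\end{equation*}

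The key step is to show that the second term vanishes. Under the observation model of the preliminaries, $y_i\sim\mathcal{N}(f(x_i),\sigma^2(x_i))$ with noise independent of $f$ and of past data. Given $f_{\mathbf{X}}$, the measurements $\mathbf{Y}$ are therefore conditionally independent of $f_{\mathcal{X}\setminus\mathbf{X}}$, and of $\mathcal{D}_t$ as well. This gives the Markov chain $f_{\mathcal{X}\setminus\mathbf{X}}\to f_{\mathbf{X}}\to\mathbf{Y}$ conditional on $\mathcal{D}_t$. In density form, $p(\mathbf{Y}\mid f_{\mathcal{X}},\mathcal{D}_t)=p(\mathbf{Y}\mid f_{\mathbf{X}})$, so the log-ratio inside the conditional mutual information is identically zero and the term equals $0$. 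The identity follows.

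The main obstacle is the infinite case, where $f_{\mathcal{X}}$ is a random function and has no joint density. I would handle it in one of two ways. The first is to define $I(f_{\mathcal{X}};\mathbf{Y}\mid\mathcal{D}_t)$ through the conditional density $p(\mathbf{Y}\mid f_{\mathcal{X}})$, which is well defined because it depends only on the finitely many coordinates $f_{\mathbf{X}}$. Then $I=\mathbb{E}\log\frac{p(\mathbf{Y}\mid f_{\mathcal{X}},\mathcal{D}_t)}{p(\mathbf{Y}\mid\mathcal{D}_t)}=\mathbb{E}\log\frac{p(\mathbf{Y}\mid f_{\mathbf{X}})}{p(\mathbf{Y}\mid\mathcal{D}_t)}$, which is exactly $I(f_{\mathbf{X}};\mathbf{Y}\mid\mathcal{D}_t)$. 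The second is to take the supremum over finite sets $A\supseteq\mathbf{X}$, the standard definition of mutual information for general random elements. The finite-case argument gives the same value $I(f_{\mathbf{X}};\mathbf{Y}\mid\mathcal{D}_t)$ for every such $A$, so the supremum equals it.
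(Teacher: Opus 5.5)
Your argument is correct, but it is organized differently from the paper's. The paper first proves an auxiliary identity (Lemma~\ref{lemma:KL-post-pre}). For each realized $\mathbf{Y}$, it reduces the posterior-to-prior divergence $D_{\text{KL}}[p(f_{\mathcal{X}}\mid\mathcal{D}\cup(\mathbf{X},\mathbf{Y}))\Vert p(f_{\mathcal{X}}\mid\mathcal{D})]$ to an expectation over $f_{\mathbf{X}}$ alone. It does this by writing the posterior via Bayes' rule and marginalizing out $f_{\mathcal{X}\setminus\mathbf{X}}$: a density factorization when $\mathcal{X}$ is finite, and the Radon--Nikodym derivative $L(\mathbf{Y}\mid f_{\mathbf{X}})/L(\mathbf{Y})$ of posterior with respect to prior when it is infinite. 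It then averages over $\mathbf{Y}$ and swaps the order of integration to recognize $\mathbb{E}_{f_{\mathbf{X}}}D_{\text{KL}}[L(\mathbf{Y}\mid f_{\mathbf{X}})\Vert L(\mathbf{Y})]=I(f_{\mathbf{X}};\mathbf{Y}\mid\mathcal{D})$.

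You instead split $f_{\mathcal{X}}$ with the chain rule and kill $I(f_{\mathcal{X}\setminus\mathbf{X}};\mathbf{Y}\mid f_{\mathbf{X}},\mathcal{D}_t)$ via the Markov chain $f_{\mathcal{X}\setminus\mathbf{X}}\to f_{\mathbf{X}}\to\mathbf{Y}$. Both proofs rest on the same fact: the likelihood sees $f$ only through $f_{\mathbf{X}}$. Your version states this as the single structural hypothesis and needs no explicit Bayes-rule computation or Fubini swap. Your step dropping $\mathbf{X}$ is just $I(f_{\mathcal{X}};\mathbf{X}\mid\mathcal{D}_t)=0$, which the paper gets for free by treating $\mathbf{X}$ as fixed.

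Your first infinite-case option is essentially the paper's mechanism. It uses the density of the joint law relative to the product of marginals instead of posterior relative to prior, and it is justified by the same dominated-model fact the paper cites. Your second option is genuinely different and cleaner. By Dobrushin's theorem, mutual information with respect to the product $\sigma$-algebra is the supremum over finite subfamilies, so everything reduces to the finite case. You should cite this theorem, and note that sets $A\supseteq\mathbf{X}$ suffice because $A\mapsto I(f_A;\mathbf{Y}\mid\mathcal{D}_t)$ is monotone.

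What the paper's route buys is a stronger pointwise statement: the information gain for each contemplated outcome, not just its average over $\mathbf{Y}$, depends only on $f_{\mathbf{X}}$. That is the form in which Term~4 of the EFE decomposition appears. Applying your Markov argument to the chain rule for KL divergence would recover this as well.
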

\begin{proof}
    See Appendix \ref{appendix: mutual info}.
\end{proof}

Lemma~\ref{lemma:mutual info} shows that, under a non-parametric model such as a Gaussian process, the information gained from observing $\mathbf{Y}$ at $\mathbf{X}$ is mediated entirely through the function values $f_{\mathbf{X}}$ that directly generate those observations. 
These local evaluations play a role analogous to parameters in parametric models or latent states in partially observed dynamical systems, but they need not belong to a fixed global representation.

This compression reveals both the power and the cost of non-parametric representation. 
The power is parsimony: the agent need not reason about every component of an infinite-dimensional latent object when only a local slice can influence the prospective observation. 
The cost is semantic instability: because the symbol is query-dependent, its meaning is induced by the decision context rather than fixed in advance. 

This also suggests a broader role for PraC. 
PraC asks not merely how expressive a model is, but whether the symbol used by the acquisition is sufficient for action evaluation, minimal enough to avoid epistemic redundancy, and structured enough to support interpretable decision-making.
Thus beyond selecting actions, PraC can also guide the construction of representations. 
When the relevant structure is unclear, induced local symbols can reveal which distinctions are repeatedly useful for decision. 
Over time, such recurring local distinctions may be consolidated into more stable global symbols. 
In this sense, PraC does not only act within a representation; it also provides a criterion for which representations deserve to persist.

\subsection{Regrets and Goals}\label{subsec:regrets}

\textit{``All successes are successful alike, all failures are failed in their own way.''}
Most of the time, we don't know what the goal is until we arrive there.
It seems easier to address what it is not than what it is: success is often rare and difficult to characterize in full, whereas shortfall, inconsistency, or violation is much more common and easier to recognize and certify.

For this reason, it is often more natural to encode what an outcome fails to achieve than to assume that a complete reward landscape is already known. 
Under this view, we define \textit{regret} as the shape cast by a goal onto outcome space. 
Regret thus acts as a carrier of goal: it gives an operational representation of what matters by quantifying how an outcome falls short of what is desired.

We begin by formally defining a generalized notion of regret.

\begin{definition}[\textbf{Regret Function}]\label{def:ture regret}
Let $\mathcal{Y}$ denote the outcome space, and let $\xi$ denote a desired reference, such as an optimal outcome, a target set, a safety specification, or a preference criterion. 
A regret function is a mapping
$$
r(\cdot;\xi):\mathcal{Y}\to\mathbb{R}_{\ge 0}
$$
that quantifies how an outcome $y$ falls short of the desired reference $\xi$. It obtains $0$ for the desired outcomes, \ie, $\inf_{y\in\mathcal{Y}} r(y;\xi)=0$, with smaller values indicating outcomes that are closer to what is desired.
\end{definition}

This abstraction recovers several important cases:
\begin{itemize}
\item When $\xi=y^\ast$ is the global optimum of an objective function, $r(y;\xi)$ reduces to the conventional instantaneous regret in BO.
\item When no outcome-dependent pragmatic preference is imposed, as in pure BED, one may take $r(y;\xi)\equiv 0$, so that decision-making is driven entirely by epistemic value.
\item Intermediate choices recover hybrid objectives that jointly encode performance, safety, feasibility, or other task-specific desiderata.
\end{itemize}

In practice, however, the ideal regret function $r(\cdot;\xi)$ is generally unknown, implicit, or computationally intractable. 
A decision-maker therefore cannot act directly on $r$. 
Instead, it must rely on a belief-conditioned surrogate that reflects its current internal assessment of outcome desirability. 
This motivates the following definition of a potential energy function.

\begin{definition}[\textbf{Potential Energy Function}]
The potential energy function is a belief-conditioned surrogate regret
$$
h_t:\mathcal{Y}\to\mathbb{R}_{\ge 0},
\qquad
h_t(y):=h(y\mid\mathcal{D}_{t}),
$$
that represents the agent's current internal assessment of the undesirability of outcome $y$ based on the information available up to time $t$. Without loss of generality, we may shift $h_t$ so that
$
\inf_{y\in\mathcal{Y}} h_t(y)=0.
$
Thus, $h_t$ serves as a time-varying, belief-dependent approximation of the true regret structure, and is updated recursively as new data are acquired.
\end{definition}

The distinction between $r$ and $h_t$ is what makes PraC operational. 
The regret function $r$ represents the ideal task-level notion of how an outcome falls short of what is desired, whereas $h_t$ is the best currently available decision-usable surrogate induced by the agent's present belief state. 
This mirrors the epistemic approximation used earlier in deriving PraC: just as the inaccessible true posterior is replaced by the current surrogate belief, the inaccessible true regret landscape is replaced by a belief-conditioned potential over outcomes.  
As more data are collected, both the predictive model and the potential energy evolve, so that the agent's internal representation of desire is refined together with its understanding of the world.

This distinction also clarifies how $h_t$ should be designed in different regimes.

\paragraph{Known goals.}
When the goal is known, $h_t$ can be designed to directly encode domain knowledge. 
In this case, the designer already has access to a meaningful notion of what constitutes shortfall, so the potential energy can be specified explicitly. 
Examples include classical regret with respect to an optimum, distance to a target set, penalties for constraint violation, or weighted combinations of task performance and safety requirements. 
In this regime, the role of $h_t$ is to express a known desiderative structure in a form that can be evaluated under the predictive model.

\paragraph{Unknown goals.}
When the goal is unknown or only partially specified, the regret itself must be learned. 
In this regime, the pragmatic term should be endowed with additional structure, so that the agent can infer not only how the world behaves, but also what outcomes should count as desirable or undesirable. 
A natural construction is hierarchical: outcome desirability is mediated by additional latent variables or higher-level preference models. 
Given an unknown regret model $g:\mathcal{Y}\to\mathbb{R}_{\ge 0}$, one can define $h_t$ through an inner EFE over that regret model, yielding the nested acquisition
\begin{equation}\label{eqn:AF nested}
    \begin{split}
        \alpha(x\mid \mathcal{D}_{t}) = \beta_t &I(f_{\mathcal{X}};(x,y)\mid \mathcal{D}_{t}) 
        +\mathbb{E}_{q(y\mid x, \mathcal{D}_{t})}\left[\right.
        \gamma_t I(g_{\mathcal{Y}};(y,z)\mid \mathcal{D}_{t})-\mathbb{E}_{q(z\mid y,\mathcal{D}_{t})}[z]\left.\right],
    \end{split}
\end{equation}
where $\beta_t, \gamma_t \ge 0$, and $z \sim g(y)$ encodes a second-order regret, namely the regret assigned by the learned regret model itself. 

In this way, the agent does not optimize with respect to a fully specified reward function given in advance; rather, it learns a surrogate of regret jointly with the predictive model. 
This yields a richer and more flexible account of the interplay between learning and optimization, and enables multiple layers of inference to be composed within a single decision rule. 
For instance, as we show later in \ref{subsec:cbo}, the model $g(y)$ can be learned through preference feedback. 

This point highlights a key advantage of the PraC paradigm. 
Unlike many mainstream approaches that presuppose a fully specified reward function, PraC requires only a current surrogate of what can already be recognized as desirable or regrettable, together with a mechanism for refining that surrogate as evidence accumulates. 
One need not begin with complete illumination; it is enough to start by seeing a light in the darkness, and to let both understanding of the world and understanding of value sharpen together.

\subsection{The Degree of Curiosity}\label{subsec:curiosity}

If symbols specify what hidden distinctions are worth learning, and regrets specify what desired outcomes are worth pursuing, then the coefficient $\beta_t$ determines how much immediate pragmatic sacrifice the agent is willing to tolerate in order to refine its epistemic understanding. 
In this sense, $\beta_t$ is the exchange rate between clarifying knowledge and pursuing goals, which we denote as the degree of \textit{curiosity}.

Different regimes of $\beta_t$ produce different modes of behavior. 
When $\beta_t=0$, the agent treats its current symbols as fixed and acts myopically with respect to the present surrogate of regret. 
At the other extreme, a very large $\beta_t$ prioritizes information acquisition even when that information has weak pragmatic relevance. 
Neither extreme is satisfactory: seeking knowledge without goals degenerates into aimless information gathering, whereas pursuing goals without clarified meanings risks acting on an incomplete understanding of the world. 

This reveals a closed-loop structure between knowledge and action. 
The agent acts according to its current symbolic understanding and regret surrogate; the resulting observation then reshapes that understanding and changes which future actions become desirable. 
Thus, curiosity should not be viewed as a static exploration heuristic. 
Rather, it is a regulated gain in this knowledge--action loop: it determines how strongly unresolved uncertainty should influence action when the agent's current understanding may still be insufficient for reliable decision-making. 
This motivates designing $\beta_t$ as an adaptive controller for the knowledge--action loop. 

\paragraph{Feedforward scale.}
The feedforward component estimates the scale of curiosity required for epistemic value to be comparable with pragmatic value. 
Given a finite candidate set $\mathcal X_t^{\mathrm{cand}}$, let
$$
\widehat I_t(x)\approx I(s;(x,y)\mid\mathcal D_{t}),
\qquad
\widehat H_t(x):=\mathbb E[h_t(y)\mid x,\mathcal D_{t}],
$$
where $\widehat I_t(x)$ estimates the epistemic value of querying $x$, and $\widehat H_t(x)$ estimates its expected pragmatic regret. 
To avoid unstable ratios when information gain is negligible, define the informative candidate set
$$
\mathcal X_t^{\mathrm{info}}
=
\{x\in\mathcal X_t^{\mathrm{cand}}:\widehat I_t(x)>\varepsilon_I\},
$$
where $\varepsilon_I>0$ is a numerical floor. 
If $\mathcal X_t^{\mathrm{info}}$ is nonempty, we define the feedforward curiosity scale as
$$
\beta_t^{\mathrm{ff}}
=
\operatorname{Quantile}_{x\in\mathcal X_t^{\mathrm{info}}}
\frac{\widehat H_t(x)}
{\widehat I_t(x)+\varepsilon_I}.
$$
If $\mathcal X_t^{\mathrm{info}}=\varnothing$, we set $\beta_t^{\mathrm{ff}}=\beta_{\min}$, since the current candidate set contains no sufficiently informative query.

This feedforward term $\beta_t^{\mathrm{ff}}$ estimates the local exchange rate between information and regret: when useful information is scarce relative to pragmatic regret, the required curiosity scale increases; when information is readily available, a smaller scale is sufficient to make information compete with immediate pragmatic value.

\paragraph{Feedback activation.}
The feedback component determines whether epistemic pressure is still needed.
Let $U_t$ be a scalar uncertainty measure over the relevant symbol, such as posterior entropy, and let $U^\star$ denote the desired epistemic equilibrium, typically zero or an irreducible uncertainty floor. 
Define 
$$
\beta_t^{\mathrm{fb}}
=
k_{\beta}\operatorname{clip}
\left(
\frac{U_t-U^\star}{U_0-U^\star+\varepsilon_U}, 0, 1
\right),
$$
where $U_0$ is the initial uncertainty, $k_\beta$ is a feedback gain, and $\varepsilon_U>0$ prevents numerical instability. 

The combined schedule for curiosity coefficient is then
$$
\beta_t
=
\operatorname{clip}
\left(
\beta_t^{\mathrm{fb}}\beta_t^{\mathrm{ff}},
\beta_{\min},
\beta_{\max}
\right),
$$
where the two components play complementary roles: the feedforward term estimates how large curiosity must be for information to compete with pragmatic value, while the feedback activation suppresses curiosity as the belief approaches the desired epistemic equilibrium. 
Thus, curiosity is high only when information is both decision-relevant and epistemically needed; it relaxes when the relevant uncertainty has been resolved.

\section{Experiments}

PraC provides a principled but flexible acquisition-design template through three coupled choices: 
a \textit{symbolic representation} of what is worth learning, a \textit{regret-based potential} that encodes what outcomes matter, and a \textit{curiosity coefficient} that determines how strongly epistemic and pragmatic pressures are traded against one another. 
In this section, we instantiate these choices across three regimes of increasing complexity.

The experiments are organized according to the structure exposed by PraC. 
We begin with \textit{fixed global symbols} and \textit{known goals}, where the latent quantity is a finite hypothesis and the regret is a downstream Bayes risk. 
We then consider \textit{induced local symbols} with \textit{known but evolving goals}, where the relevant latent quantity is induced by a non-parametric surrogate and the goal depends on the coverage already obtained. 
Finally, we study a \textit{hierarchical} setting with \textit{unknown goals}, where the agent must learn both how actions produce outcomes and how those outcomes should be valued. 
We study dynamic scheduling of $\beta_t$ in detail in the first regime, where its closed-loop interpretation is most transparent. 
In the non-parametric and hierarchical regimes, we use fixed curiosity weights to isolate the effects of representation and regret design.

\begin{figure*}[t!]
    \centering
    \includegraphics[width=0.95\linewidth]{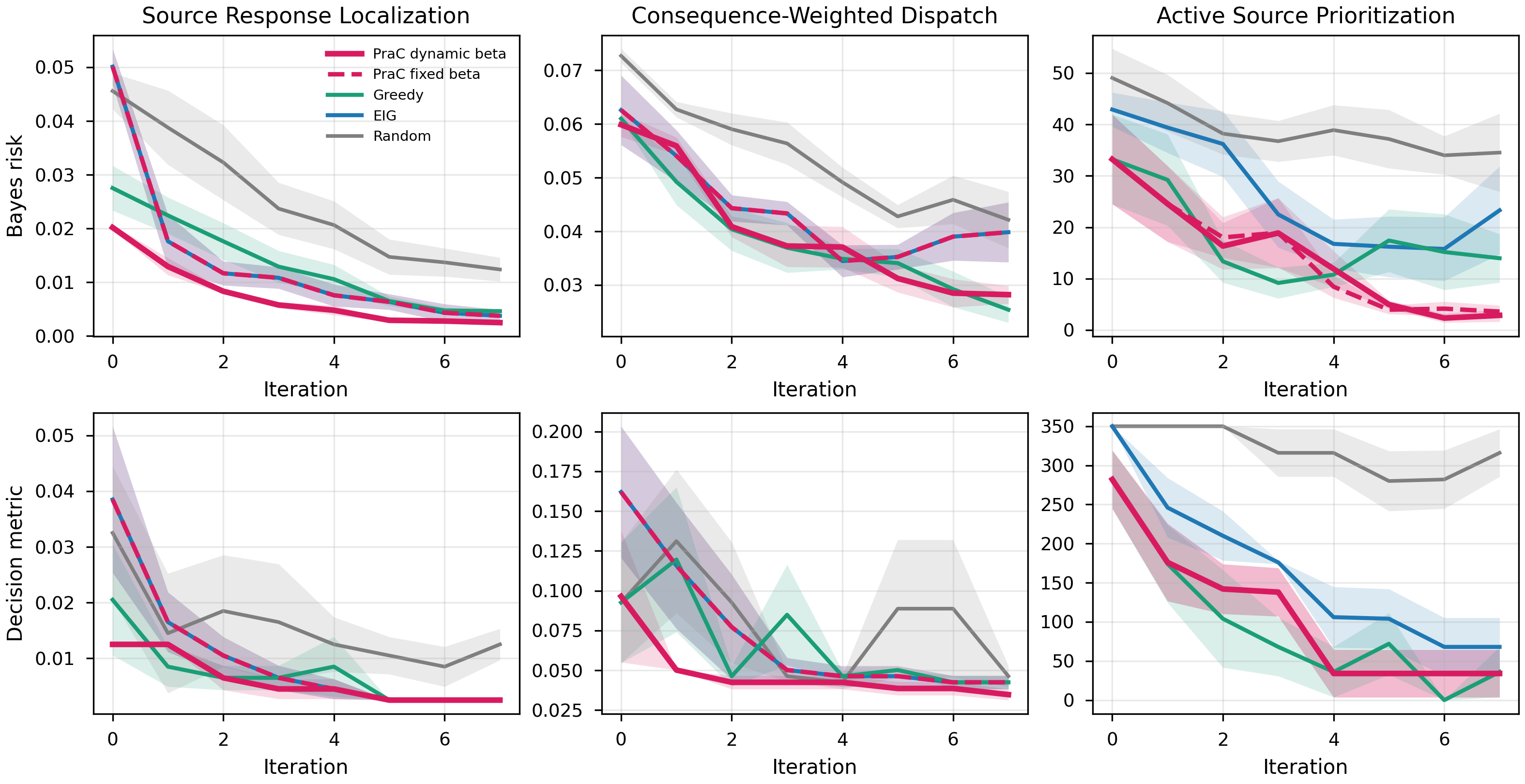}
    \caption{
    Decision-oriented plume-monitoring performance over 20 seeds. 
    Top row: posterior Bayes risk. 
    Bottom row: task-specific realized decision metric: response loss for source response localization and consequence-weighted dispatch, and missed-source risk for active source prioritization.  
    Shaded regions show mean $\pm$ standard error.
    }
    \label{fig:decision_source_seeking}
\end{figure*}

\subsection{Fixed Global Symbols with Known Goals}\label{subsec:decision_source_seeking}

We begin with the most transparent regime, in which the symbolic structure is fixed and the downstream goal is explicitly specified. 
Here the unknown quantities are finite-dimensional hypotheses with stable semantic meaning across all decisions, so the epistemic target is a \textit{fixed global symbol}. 
At the same time, the task objective is known in advance and can be expressed through a closed-form downstream regret. 
This regime provides the cleanest setting in which to examine whether \textit{curiosity} can be elevated from a heuristic coefficient to a monitored and interpretable part of the decision process.

A representative problem of this form is \textbf{decision-oriented monitoring}, where the purpose of sensing is not merely to estimate environmental parameters, but to improve a downstream response decision. 
At each iteration, the agent maintains a posterior over latent environmental hypotheses $\theta$, selects a sensor query $x$, observes a plume count $y$, updates the posterior, and evaluates the downstream decision induced by the updated belief. 
Let $a\in\mathcal A$ denote a downstream action, and let $L(a,\theta)$ denote the task-specific loss incurred by taking action $a$ when the true hypothesis is $\theta$. 
Given the current surrogate posterior $q_t(\cdot)$, the Bayes risk is
\begin{equation*}
    \mathrm{BR}(\mathcal D_t)
    \coloneq
    \min_{a\in\mathcal A}
    \mathbb E_{\theta\sim q_t}
    \left[
        L(a,\theta)
    \right].
\end{equation*}
This measures the downstream loss the agent still expects to incur if it acts optimally under its current belief.

For a candidate sensor query $x$, define the expected Bayes-risk reduction
\begin{equation*}
    \widehat{\Delta \mathrm{BR}}_t(x)
    \coloneq
    \mathrm{BR}(\mathcal D_t)
    -
    \mathbb E_{q_t(y\mid x)}
    \left[
        \mathrm{BR}(\mathcal D_t\cup\{(x,y)\})
    \right].
\end{equation*}
The PraC acquisition becomes
\begin{equation}
    \alpha_t(x)
    =
    \beta_t I(\theta;(x,y)\mid\mathcal D_t)
    +
    \widehat{\Delta \mathrm{BR}}_t(x).
    \label{eq:prac_decision_monitoring}
\end{equation}

\noindent\textbf{Tasks.}
We evaluate this setting on three decision-oriented plume-monitoring tasks in 2D fields. 
(a) \textit{Source response localization} asks the agent to localize a plume source well enough to dispatch a response team near it. 
Here the downstream action $a$ is a response location, and the loss $L(a,\theta)$ is the normalized squared distance between the response location and the source location specified by $\theta$. 
(b) \textit{Consequence-weighted dispatch} uses the same response-decision structure, but errors in high-consequence regions are assigned larger loss. 
(c) \textit{Active source prioritization} asks the agent to select which suspected industrial leaks to repair under a limited crew budget. 
Here the downstream action $a$ is a subset of sources, and the loss $L(a,\theta)$ is the weighted risk of true active sources not selected for repair. 
Detailed plume models, hypothesis spaces, downstream losses, and Bayes-risk computations are provided in Appendix~\ref{appendix:decision_source_seeking}.

\noindent\textbf{Baselines.}
We compare two variants of PraC with (i) dynamic $\beta$ as scheduled in Section~\ref{subsec:curiosity}, and (ii) fixed-$\beta$ with $\beta=5$, against (a) random sampling, (b) pure information gain (EIG), and (c) decision-greedy policy by minimizing $BR_t(x)$.

\noindent\textbf{Evaluations.}
We report posterior Bayes risk and task-specific realized decision metrics. 
For source response localization and consequence-weighted dispatch, the realized metric is response loss under the true source. 
For active source prioritization, the realized metric is missed-source risk, namely the total consequence weight of true active sources not selected for repair. 
Additional metrics, including response success, parameter error, top-$k$ recall, weighted top-$k$ recall, and detailed analysis are reported in Appendix~\ref{appendix:decision_monitoring_analysis}.

\noindent\textbf{Results.}
Fig.~\ref{fig:decision_source_seeking} shows that dynamic PraC gives the strongest overall performance across the three decision-oriented plume tasks. 
In source response localization, dynamic PraC achieves the lowest final Bayes risk while matching the best realized response loss. 
In consequence-weighted dispatch, decision-greedy obtains the lowest final Bayes risk, but dynamic PraC obtains the best realized response loss and the lowest parameter error. 
This distinction is important: Bayes risk measures expected loss under the agent's current posterior, whereas realized response loss evaluates the downstream decision under the true latent environment. 
Dynamic curiosity can therefore sacrifice some immediate posterior-risk reduction in order to collect information that improves the realized downstream decision. 
In active source prioritization, dynamic PraC obtains the lowest final Bayes risk, improving over fixed-$\beta$ PraC, decision-greedy, EIG, and random sampling. 
Dynamic and fixed-$\beta$ PraC tie on missed-source risk, while both substantially improve over EIG and random sampling.

\noindent\textbf{Dynamic scheduling of curiosity.}
Scheduler diagnostics are reported in Appendix~\ref{appendix:dynamic_curiosity}. 
The feedforward scale estimates the local exchange rate between Bayes-risk reduction and information gain, while the feedback activation suppresses curiosity as decision-relevant uncertainty decreases. 
These diagnostics support the intended interpretation of $\beta_t$: dynamic PraC does not merely reduce environmental uncertainty for its own sake, but learns distinctions that matter for downstream response decisions.

\begin{figure*}[t!]
    \centering
    \includegraphics[width=0.95\linewidth]{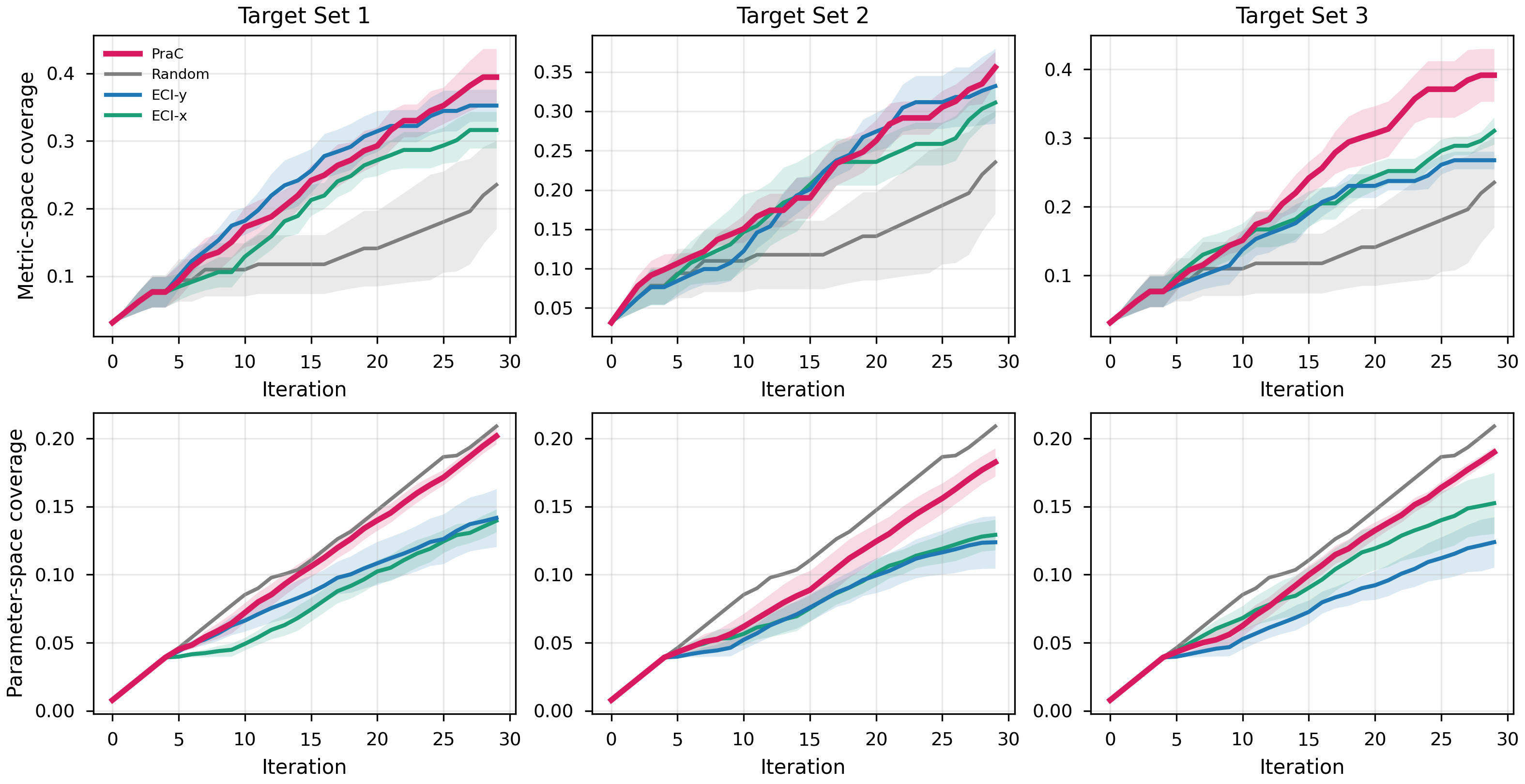}
    \caption{Performance evaluation for targeted active search on failure discovery in autonomous driving scenarios. 
    Top row: coverage in the outcome/metric space. 
    Bottom row: coverage in the input/parameter space. 
    Error bars represent $\pm 1$ standard deviation over 4 seeds.}
    \label{fig:tas}
\end{figure*}

\subsection{Induced Local Symbols with Known Evolving Goals}

We now move to a more challenging regime in which the latent structure is no longer naturally captured by a fixed finite-dimensional parameter. 
Instead, the relevant symbolic representation is induced locally by the contemplated observations, as described in Section~\ref{subsec:symbols}. 
At the same time, the goal remains known, but it is no longer invariant: it evolves with the outcomes already collected. 
This regime is designed to highlight the value of PraC's \textit{symbolic} flexibility. 
To isolate that effect, we retain a fixed curiosity weight in this subsection.

A representative problem of this form is \textbf{targeted active search} in multi-objective design. 
Here the objective is not simply to optimize a scalar reward, but to cover as much as possible of an outcome region $\mathcal{S}$ deemed important. 
Following \citet{Malkomes2021BeyondDesign}, we assume a known resolution $\delta$, so that outcomes within distance $\delta$ are treated as redundant with respect to target coverage. 
Define the coverage neighborhood of an outcome $y$ and a set of outcomes $Y$ by
$$
\mathbb{C}_\delta(y):=\{y':d(y,y')<\delta\},
\qquad
\mathbb{C}_\delta(Y):=\bigcup_{y\in Y}\mathbb{C}_\delta(y).
$$
Then a natural potential is
$$
h_t(y)
=
\mathrm{Vol}(\mathcal{S})
-
\mathrm{Vol}\!\left(\mathbb{C}_\delta(Y\cup y)\cap\mathcal{S}\right),
$$
which penalizes outcomes that add little new coverage of the target set. 
Meanwhile, the relevant epistemic target is the local non-parametric symbol induced by the contemplated observation, so we take $s=f_{\mathcal{X}}$ and use the representation-compression result in Lemma~\ref{lemma:mutual info} to obtain
\begin{equation*}
    \alpha(x\mid \mathcal{D}_t)
    =
    \beta I(f_x;y\mid \mathcal{D}_t)
    +
    \mathbb{E}_{q(y\mid x,\mathcal{D}_t)}
    \!\left[
    \mathrm{Vol}\!\left(\mathbb{C}_\delta(Y\cup y)\cap\mathcal{S}\right)
    \right].
\end{equation*}

\noindent \textbf{Tasks.} 
We study failure discovery for a YOLO-based perception module in autonomous-driving scenarios. 
The input is a 3-dimensional scenario parameterization, and the outcome is a 2-dimensional failure metric describing two failure modes that may lead to collision. 
We consider three nested target sets of decreasing volume,
$
    \mathcal S_1 \supset \mathcal S_2 \supset \mathcal S_3,
$
where smaller target sets correspond to rarer and more difficult failure regions. 
Details of the CARLA simulation, failure metrics, and target-set definitions are provided in Appendix~\ref{appendix: carla}.

\noindent \textbf{Baselines.} 
We compare PraC against random sampling and two expected coverage improvement (ECI) variants adapted from \citet{Malkomes2021BeyondDesign}: 
(a) ECI-y, which greedily prioritizes target coverage in the outcome/metric space, and 
(b) ECI-x, which prioritizes coverage in the input/parameter space. 
These baselines isolate the two coverage pressures that PraC jointly coordinates.

\noindent \textbf{Evaluations.} 
We evaluate coverage in both spaces, including metric-space coverage $\mathbb{C}_\delta(Y)$, and parameter-space coverage $\mathbb{C}_\delta(X)$.

\noindent \textbf{Results.} 
As shown in Fig.~\ref{fig:tas}, PraC effectively balances the two forms of coverage. 
It does not merely spread samples broadly in parameter space, nor does it myopically chase already discovered target regions in outcome space. 
Instead, the information term encourages learning of the local surrogate structure, while the coverage potential directs samples toward still-uncovered parts of the target region. 
This coordination is especially useful for smaller target sets, where informative failures are sparse and purely greedy coverage strategies become brittle. 
In the most challenging case, Target Set~3, PraC discovers nearly 10 percentage points more normalized target-region coverage than the strongest baseline. 

\noindent\textbf{Role of induced symbols.}
This experiment emphasizes the representational role of PraC. 
Once the epistemic target is allowed to be induced locally by the contemplated observation, the same acquisition principle naturally adapts from finite-hypothesis parameter identification to black-box target search with non-parametric surrogates.

\begin{figure*}[t!]
    \centering
    \includegraphics[width=0.95\linewidth]{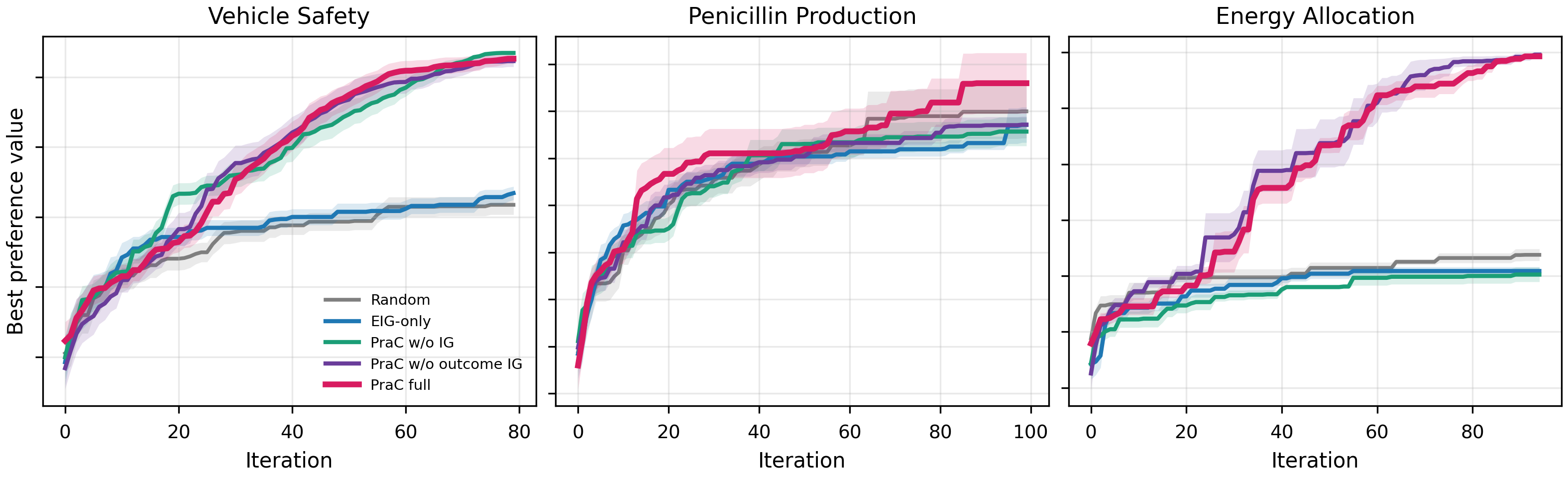}
    \caption{Performance evaluation for composite BO with unknown preferences. 
    Metric is the best preference value attained among all collected outcomes. 
    Error bars represent $\pm 1$ standard deviation over 20 seeds.}
    \label{fig:pbo}
\end{figure*}

\begin{figure}[t!]
    \centering
    \includegraphics[width=0.5\linewidth]{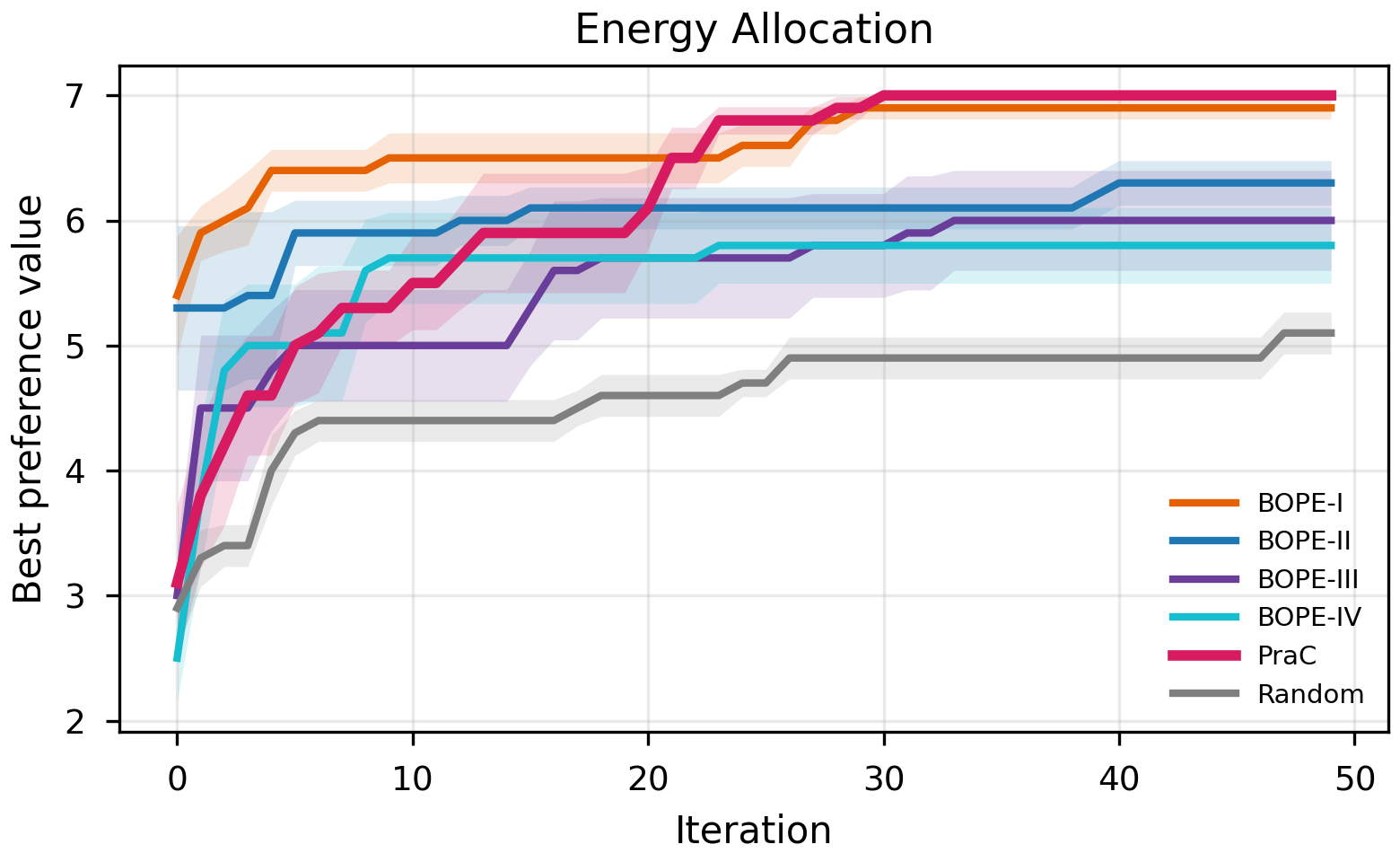}
    \caption{
    Comparison with BOPE-style stage-wise variants for energy resource allocation. 
    Metric is the best preference value attained.
    Error bars represent $\pm 1$ standard deviation over 20 seeds.
    }
    \label{fig:main_bope_opf_comparison}
\end{figure}

\subsection{Hierarchical Symbols with Unknown Goals}\label{subsec:cbo}

Finally, we consider the most difficult regime, in which neither the predictive structure nor the goal is fully specified in advance. 
In this setting, the agent must learn not only how actions produce outcomes, but also how outcomes should be valued. 
This experiment is designed to emphasize the role of \textit{regret} in PraC: the pragmatic term is no longer specified directly, but is itself inferred through an additional layer of modeling. 
To keep the focus on this hierarchical regret design, we retain fixed curiosity weights in this subsection.

A representative example is \textbf{composite Bayesian optimization} with unknown preferences. 
The outcomes are multi-objective vectors, but the scalar preference over them is determined by an unknown function $g(y)$ that must be learned during optimization. 
Following \citet{Lin2022PreferenceOutcomes}, we infer $g(y)$ from pairwise preference queries. 
For a pair of outcomes $\tilde y=[y_1,y_2]$, let $l(y_1,y_2)\in\{1,2\}$ denote which outcome is preferred by the decision-maker. 
Following \citet{Chu2005PreferenceProcesses}, we assume the probit likelihood
$$
\Pr(l(y_1,y_2)=1\mid g(y_1),g(y_2))
=
\Phi\!\left(\frac{g(y_2)-g(y_1)}{\sqrt{2}\lambda}\right),
$$
where $\lambda$ is a hyperparameter and $\Phi$ is the standard normal CDF. 
Under this formation, outcome $y_1$ is more likely to be preferred when it has lower latent regret than $y_2$.

At each iteration, the method selects a candidate pair $\tilde x=[x_1,x_2]$ and observes the corresponding pair of outcomes $\tilde y=[y_1,y_2]$ and a preference label $l$. 
This yields the hierarchical PraC acquisition
\begin{equation}\label{eqn:AF cbo}
    \begin{split}
        \alpha(\tilde x\mid \mathcal{D}_t)
        =
        \beta_t I(f_{\mathcal X};(\tilde x,\tilde y)\mid \mathcal{D}_t)
        +
        \mathbb{E}_{q(\tilde y\mid \tilde x,\mathcal{D}_t)}
        \left[
        \gamma_t I(g_{\mathcal Y};(\tilde y,l)\mid \mathcal{D}_t)
        -
        \min_{y\in\{y_1,y_2\}}
        \mathbb{E}_{q(g_y\mid y,\mathcal{D}_t)}[g_y]
        \right],
    \end{split}
\end{equation}
where the first term learns the outcome model $f$; 
the second term learns the regret model $g$ through informative pairwise comparisons; and the final term exploits the currently predicted best member of the evaluated pair by favoring low expected regret. 

\noindent \textbf{Tasks.} 
We evaluate this setting on three real-world multi-objective optimization problems of increasing complexity: 
(a) \textit{vehicle safety} with 5-dimensional inputs and 3-dimensional outcomes, 
(b) \textit{penicillin production} with 7-dimensional inputs and 3-dimensional outcomes, and 
(c) \textit{energy allocation} in power grids with 40-dimensional inputs and 4-dimensional outcomes. 
Details of the simulators, outcome definitions, and ground-truth preference models are provided in Appendix~\ref{appendix:power grid}.

\noindent \textbf{Baselines.} 
We first compare the full hierarchical PraC acquisition against ablated variants obtained by removing: 
(a) the outer epistemic term over $f$, 
(b) the inner epistemic term over $g$, or 
(c) both epistemic terms, leaving only the pragmatic preference term. 
These ablations isolate the contribution of outcome-model learning, preference-model learning, and their joint interaction. 
We additionally compare against BOPE-style stage-wise variants \citep{Lin2022PreferenceOutcomes} for the energy-allocation task, where the distinction between joint and staged learning is most pronounced. Detailed design of BOPE-style variants are provided in Appendix~\ref{appendix:aif_vs_bope}.

\noindent \textbf{Evaluations.} 
We evaluate the best preference value attained among all collected outcomes using the true latent preference function.

\noindent \textbf{Results.} 
Fig.~\ref{fig:pbo} shows that PraC consistently outperforms the ablated variants in learning and exploiting the unknown preference structure. 
As the tasks become more complex and noisier, each component of the hierarchical acquisition becomes increasingly important. 
This is most evident in the energy-allocation problem, where variants that remove either outcome-model exploration or preference-model exploration fail to discover high-preference regions reliably, whereas the full PraC acquisition continues to improve.

\noindent \textbf{Benefits of joint learning and optimization.}
Fig.~\ref{fig:main_bope_opf_comparison} compares PraC with BOPE-style stage-wise variants on energy allocation. 
BOPE alternates between preference exploration and experimentation through manually specified stages or switching rules. 
PraC instead integrates outcome learning, preference learning, and preference-guided optimization at every iteration through a single acquisition. 
The comparison shows that stage-wise methods are sensitive to their switching schedules, while PraC more reliably discovers high-preference regions. 
Thus, the hierarchical version of PraC not only learns what the world does, but also learns what matters, without requiring a manually staged separation between learning and optimization.

\section{Conclusion}

This paper introduced \textbf{Pragmatic Curiosity (PraC)}, an active-inference framework for hybrid learning and optimization. 
PraC addresses settings in which an agent cannot cleanly separate what it needs to learn from what it needs to do. 
By coupling epistemic information-seeking with a regret-based pragmatic potential, PraC provides a unified acquisition principle for deciding which information is worth acquiring because of how it may improve action.

The framework exposes three operational design choices: \textit{symbols}, the latent quantities whose clarification can change action; \textit{regrets}, the potential landscapes that encode task value or shortfall; and \textit{curiosity}, the exchange rate between information and pragmatic value. 
These choices provide a common language for interpreting existing BO and BED acquisition rules, while enabling new designs for hybrid problems where learning, optimization, and preference formation are coupled.

We instantiated PraC across decision-oriented monitoring, targeted active search, and composite Bayesian optimization with unknown preferences. 
Across these regimes, PraC reduced downstream decision risk, improved coverage of critical outcome regions, and jointly learned predictive and preference structures without relying on task-specific staging rules.

More broadly, PraC suggests that curiosity should not be treated merely as an exploration heuristic. 
When grounded in task-relevant symbols and regulated by regret-based potentials, curiosity becomes a mechanism for refining knowledge and action together. 
Future work includes non-myopic extensions, stronger theoretical guarantees, scalable approximations, and applications to embodied robotic systems.

\bibliography{refs}
\bibliographystyle{tmlr}

\appendix
\section{Proof of Lemma~\ref{lemma:mutual info} (Representation Compression)}\label{appendix: mutual info}

To prove Lemma~\ref{lemma:mutual info}, we first introduce a lemma:

\begin{lemma}\label{lemma:KL-post-pre}
For any (finite or infinite) set $\mathcal{X}$, after a few new measurements $(\mathbf{X},\mathbf{Y}), \mathbf{X} \subseteq \mathcal{X}$ the KL divergence between $p(f_{\mathcal{X}}|\mathcal{D}\cup (\mathbf{X},\mathbf{Y}))$ and $p(f_{\mathcal{X}}|\mathcal{D})$ is 
\begin{equation*}
    D_{\text{KL}}[p(f_{\mathcal{X}}|\mathcal{D}\cup (\mathbf{X},\mathbf{Y}))\Vert p(f_{\mathcal{X}}|\mathcal{D})]=\mathbb{E}_{p(f_{\mathbf{X}}|\mathcal{D})}[\frac{L(\mathbf{Y}|f_{\mathbf{X}})}{L(\mathbf{Y})} \log \frac{L(\mathbf{Y}|f_{\mathbf{X}})}{L(\mathbf{Y})}],
\end{equation*}
where $L(\mathbf{Y}|f_{\mathbf{X}})$ is the likelihood of observations, $L(\mathbf{Y})=\int L(\mathbf{Y}|f_{\mathbf{X}}) p(f_{\mathbf{X}}|\mathcal{D}) df_{\mathbf{X}}$ the marginal likelihood.
\end{lemma}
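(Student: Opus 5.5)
The plan is to compute the KL divergence as a Radon--Nikodym derivative and to exploit the fact that the observations depend on the full function $f_{\mathcal{X}}$ only through the local slice $f_{\mathbf{X}}$. The structural assumption, implicit in the paper's noise model, is that the likelihood factorizes as $p(\mathbf{Y}\mid f_{\mathcal{X}},\mathcal{D})=L(\mathbf{Y}\mid f_{\mathbf{X}})$. In words, given the function values at the queried inputs, $\mathbf{Y}$ is conditionally independent of the rest of $f_{\mathcal{X}}$ and of the past data.

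\textbf{Step 1: likelihood ratio form of the posterior.} By Bayes' rule the updated posterior is absolutely continuous with respect to the prior posterior $p(f_{\mathcal{X}}\mid\mathcal{D})$, with density
\[
\frac{dp(f_{\mathcal{X}}\mid\mathcal{D}\cup(\mathbf{X},\mathbf{Y}))}{dp(f_{\mathcal{X}}\mid\mathcal{D})}=\frac{L(\mathbf{Y}\mid f_{\mathbf{X}})}{L(\mathbf{Y})}.
\]
Here $L(\mathbf{Y})=\int L(\mathbf{Y}\mid f_{\mathbf{X}})\,p(f_{\mathbf{X}}\mid\mathcal{D})\,df_{\mathbf{X}}$. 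This marginal likelihood is obtained by integrating out the remaining coordinates of $f_{\mathcal{X}}$, which is possible because the likelihood does not depend on them.

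Writing the update this way, as a density ratio rather than as two densities, is what lets the argument cover an infinite index set $\mathcal{X}$. For infinite $\mathcal{X}$ no Lebesgue density of $f_{\mathcal{X}}$ exists, but the Radon--Nikodym derivative between the two stochastic-process measures is still well defined.

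\textbf{Step 2: push forward to the local slice.} Apply the definition $D_{\text{KL}}[P\Vert Q]=\mathbb{E}_Q[\tfrac{dP}{dQ}\log\tfrac{dP}{dQ}]$. The integrand is a function of $f_{\mathbf{X}}$ alone, so the expectation over $p(f_{\mathcal{X}}\mid\mathcal{D})$ reduces to an expectation over its marginal $p(f_{\mathbf{X}}\mid\mathcal{D})$. This gives exactly the formula in Lemma~\ref{lemma:KL-post-pre}.

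\textbf{Step 3: average over $\mathbf{Y}$ to get Lemma~\ref{lemma:mutual info}.} The same expression is, verbatim, $D_{\text{KL}}[p(f_{\mathbf{X}}\mid\mathcal{D}\cup(\mathbf{X},\mathbf{Y}))\Vert p(f_{\mathbf{X}}\mid\mathcal{D})]$, since the local posterior has the same density ratio with respect to the local prior. So the KL divergence of the full posterior equals that of the marginal posterior on $f_{\mathbf{X}}$. Taking the expectation over $\mathbf{Y}\sim p(\mathbf{Y}\mid\mathbf{X},\mathcal{D})$ turns both sides into mutual informations, giving $I(f_{\mathcal{X}};(\mathbf{X},\mathbf{Y})\mid\mathcal{D}_t)=I(f_{\mathbf{X}};\mathbf{Y}\mid\mathcal{D}_t)$. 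This holds because $\mathbf{X}$ is chosen, not random, given $\mathcal{D}_t$.

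An alternative route is the chain rule: $I(f_{\mathcal{X}};\mathbf{Y})=I(f_{\mathbf{X}};\mathbf{Y})+I(f_{\mathcal{X}\setminus\mathbf{X}};\mathbf{Y}\mid f_{\mathbf{X}})$, where the last term is zero by conditional independence. This is fine for finite $\mathcal{X}$, but it is the measure-theoretic version above that handles general $\mathcal{X}$.

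\textbf{Main obstacle.} The hard part will be justifying Step 1 rigorously when $\mathcal{X}$ is infinite: the existence of the Radon--Nikodym derivative and Bayes' rule for process-valued $f_{\mathcal{X}}$. I would handle this either via Kolmogorov consistency, proving the identity for every finite superset of $\mathbf{X}$ and passing to the limit (KL is the supremum over finite-dimensional marginals), or by invoking the abstract Bayes formula for dominated likelihoods. Once Step 1 is settled, the rest is bookkeeping.
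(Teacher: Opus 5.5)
Your proposal is correct and follows essentially the same route as the paper: the paper also invokes Bayes' theorem for a dominated model to write the Radon--Nikodym derivative of the updated posterior with respect to $p(f_{\mathcal{X}}\mid\mathcal{D})$ as $L(\mathbf{Y}\mid f_{\mathbf{X}})/L(\mathbf{Y})$, converts $\int \log\frac{dP}{d\hat P}\,dP$ into an integral against the prior, and reduces it to the marginal over $f_{\mathbf{X}}$ (having first done the same cancellation with explicit densities for finite $\mathcal{X}$). Your Step 3 goes beyond this lemma and is a valid, slightly different route to Lemma~\ref{lemma:mutual info}; the paper instead swaps the two expectations and recognizes $\mathbb{E}_{p(f_{\mathbf{X}}\mid\mathcal{D})}[D_{\text{KL}}(L(\mathbf{Y}\mid f_{\mathbf{X}})\Vert L(\mathbf{Y}))]$.
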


\begin{proof}
We first consider the case where $\mathcal{X}$ is \textit{finite} to provide an intuitive insight. 
The KL divergence between $p(f_{\mathcal{X}}|\mathcal{D}\cup (\mathbf{X},\mathbf{Y}))$ and $p(f_{\mathcal{X}}|\mathcal{D})$ can be given as 
\begin{equation}\label{eqn:finite KL}
    \begin{split}
        &D_{\text{KL}}[p(f_{\mathcal{X}}|\mathcal{D}\cup (\mathbf{X},\mathbf{Y}))\Vert p(f_{\mathcal{X}}|\mathcal{D})]\\
        = &D_{\text{KL}}[p(f_{\mathcal{X}\setminus \mathbf{X}}, f_{\mathbf{X}}|\mathcal{D}\cup (\mathbf{X},\mathbf{Y}))\Vert p(f_{\mathcal{X}\setminus \mathbf{X}}, f_{\mathbf{X}}|\mathcal{D})] \\
        =&\int p(f_{\mathcal{X}\setminus \mathbf{X}}, f_{\mathbf{X}}|\mathcal{D}\cup (\mathbf{X},\mathbf{Y})) \log \frac{p(f_{\mathcal{X}\setminus \mathbf{X}}, f_{\mathbf{X}}|\mathcal{D}\cup (\mathbf{X},\mathbf{Y}))}{p(f_{\mathcal{X}\setminus \mathbf{X}}, f_{\mathbf{X}}|\mathcal{D})}df_{\mathcal{X}\setminus \mathbf{X}}df_{\mathbf{X}}\\
        =&\int p(f_{\mathcal{X}\setminus \mathbf{X}}, f_{\mathbf{X}}|\mathcal{D}\cup (\mathbf{X},\mathbf{Y})) \log \frac{p(f_{\mathcal{X}\setminus \mathbf{X}}, f_{\mathbf{X}}|\mathcal{D})L(\mathbf{Y}|f_{\mathbf{X}})}{p(f_{\mathcal{X}\setminus \mathbf{X}}, f_{\mathbf{X}}|\mathcal{D})L(\mathbf{Y})}df_{\mathcal{X}\setminus \mathbf{X}}df_{\mathbf{X}}\\
        =&\int p(f_{\mathcal{X}\setminus \mathbf{X}}, f_{\mathbf{X}}|\mathcal{D}\cup (\mathbf{X},\mathbf{Y})) \log \frac{L(\mathbf{Y}|f_{\mathbf{X}})}{L(\mathbf{Y})}df_{\mathcal{X}\setminus \mathbf{X}}df_{\mathbf{X}} \\
        =&\int p(f_{\mathbf{X}}|\mathcal{D}\cup (\mathbf{X},\mathbf{Y})) \log \frac{L(\mathbf{Y}|f_{\mathbf{X}})}{L(\mathbf{Y})} df_{\mathbf{X}}\\
        =&\int \frac{p(f_{\mathbf{X}}|\mathcal{D})L(\mathbf{Y}|f_{\mathbf{X}})}{L(\mathbf{Y})} \log \frac{L(\mathbf{Y}|f_{\mathbf{X}})}{L(\mathbf{Y})} df_{\mathbf{X}} \\
        =&\mathbb{E}_{p(f_{\mathbf{X}}|\mathcal{D})}[\frac{L(\mathbf{Y}|f_{\mathbf{X}})}{L(\mathbf{Y})} \log \frac{L(\mathbf{Y}|f_{\mathbf{X}})}{L(\mathbf{Y})}],
    \end{split}
\end{equation}
where $L(\mathbf{Y}|f_{\mathbf{X}})$ is the likelihood of observations, $L(\mathbf{Y})=\int L(\mathbf{Y}|f_{\mathbf{X}}) p(f_{\mathbf{X}}|\mathcal{D}) df_{\mathbf{X}}$ the marginal likelihood.

Now we move to the more general cases where $\mathcal{X}$ is \textit{infinite}. In such cases, there is no useful infinite-dimensional Lebesgue measure with respect to an ``infinite-dimensional vector'' $f_{\mathcal{X}}$. 
Thus, we need to resort to a more general definition for KL divergence based on the Radon-Nikodym derivative:
\begin{definition}
If $P$ and $Q$ are probability measures over a set $\mathcal{X}$, and $P$ is absolutely continuous with respect to $Q$, then the KL divergence from $P$ to $Q$ is defined as
$$
D_{\text{KL}}[P\Vert Q]=\int_{\mathcal{X}}\log(\frac{dP}{dQ})dP,
$$
where $\frac{dP}{dQ}$ is the Radon–Nikodym derivative of $P$ with respect to $Q$, and provided the expression on the right-hand side exists.
\end{definition}

According to the measure-theoretic definition of Bayes’ theorem for a dominated model \citep{Schervish1995TheoryStatistics}, the Radon-Nikodym derivative of the posterior $P(\cdot):=p(\cdot|\mathcal{D}\cup (\mathbf{X},\mathbf{Y}))$ with respect to the prior $\hat{P}(\cdot):=p(\cdot|\mathcal{D)}$ is given as 
\begin{equation*}
    \frac{d P}{d \hat{P}}(f_{\mathcal{X}}) = \frac{L(\mathbf{Y}|f_{\mathcal{X}})}{L(\mathbf{Y})}.
\end{equation*}
Since the dataset $\mathbf{Y}$ is finite, so similar to previous, we restrict the likelihood to only depend on the finite dataset:
\begin{equation*}
    \frac{d P}{d \hat{P}}(f_{\mathcal{X}}) = \frac{L(\mathbf{Y}|f_{\mathbf{X}})}{L(\mathbf{Y})}.
\end{equation*}
Now the KL divergence between $\hat{P}$ and $P$ is quantified as
\begin{equation}\label{eqn:infinite KL}
    \begin{split}
        &D_{\text{KL}}[P(f_{\mathcal{X}})\Vert \hat{P}(f_{\mathcal{X}})]\\
        =&\int_{f_{\mathcal{X}}} \log 
        (\frac{d P}{d \hat{P}}(f_{\mathcal{X}}))d P(f_{\mathcal{X}}) \\
        =&\int_{f_{\mathcal{X}}} \log 
        (\frac{L(\mathbf{Y}|f_{\mathbf{X}})}{L(\mathbf{Y})})d P(f_{\mathcal{X}}) \\
        =&\int_{f_{\mathcal{X}}} \log 
        (\frac{L(\mathbf{Y}|f_{\mathbf{X}})}{L(\mathbf{Y})}) \frac{L(\mathbf{Y}|f_{\mathbf{X}})}{L(\mathbf{Y})}d \hat{P}(f_{\mathcal{X}}) \\
        =&\int_{f_{\mathbf{X}}} \log 
        (\frac{L(\mathbf{Y}|f_{\mathbf{X}})}{L(\mathbf{Y})}) \frac{L(\mathbf{Y}|f_{\mathbf{X}})}{L(\mathbf{Y})}d \hat{P}(f_{\mathbf{X}}) \\
        =&\mathbb{E}_{\hat{P}(f_{\mathbf{X}})}[\frac{L(\mathbf{Y}|f_{\mathbf{X}})}{L(\mathbf{Y})} \log \frac{L(\mathbf{Y}|f_{\mathbf{X}})}{L(\mathbf{Y})}],
    \end{split}
\end{equation}
which has the exact same form as \eqref{eqn:finite KL}.

Therefore, we can conclude that regardless of the set $\mathcal{X}$ being finite or infinite, the KL divergence between the prior and posterior only depends on the evaluations of the observed data.
That is to say, whilst we are in fact quantifying the KL divergence between the full distributions, we only need to keep track of the distribution over finite function values $f_{\mathbf{X}}$. 
\end{proof}

Now we are ready to prove Lemma~\ref{lemma:mutual info}.

\begin{proof}
According to Lemma~\ref{lemma:KL-post-pre}, 
\begin{equation}\label{eqn:information gain}
    \begin{split}
        I(f_{\mathcal{X}};(\mathbf{X},\mathbf{Y})|\mathcal{D})=
        &\mathbb{E}_{p(\mathbf{Y}| \mathbf{X},\mathcal{D})}[D_{\text{KL}}[p(f_{\mathcal{X}}|\mathcal{D}\cup (\mathbf{X},\mathbf{Y}))\Vert p(f_{\mathcal{X}}|\mathcal{D})]] \\
        = &\mathbb{E}_{p(\mathbf{Y}| \mathbf{X},\mathcal{D})} \mathbb{E}_{p(f_{\mathbf{X}}|\mathcal{D})}[\frac{L(\mathbf{Y}|f_{\mathbf{X}})}{L(\mathbf{Y})} \log \frac{L(\mathbf{Y}|f_{\mathbf{X}})}{L(\mathbf{Y})}] \\
        =&\mathbb{E}_{L(\mathbf{Y})} \mathbb{E}_{p(f_{\mathbf{X}}|\mathcal{D})}[\frac{L(\mathbf{Y}|f_{\mathbf{X}})}{L(\mathbf{Y})} \log \frac{L(\mathbf{Y}|f_{\mathbf{X}})}{L(\mathbf{Y})}] \\
        =&\mathbb{E}_{p(f_{\mathbf{X}}|\mathcal{D})} \mathbb{E}_{L(\mathbf{Y})} [\frac{L(\mathbf{Y}|f_{\mathbf{X}})}{L(\mathbf{Y})} \log \frac{L(\mathbf{Y}|f_{\mathbf{X}})}{L(\mathbf{Y})}] \\
        =&\mathbb{E}_{p(f_{\mathbf{X}}|\mathcal{D})} \int [\frac{L(\mathbf{Y}|f_{\mathbf{X}})}{L(\mathbf{Y})} \log \frac{L(\mathbf{Y}|f_{\mathbf{X}})}{L(\mathbf{Y})}] L(\mathbf{Y}) d\mathbf{Y} \\
        =&\mathbb{E}_{p(f_{\mathbf{X}}|\mathcal{D})} \int L(\mathbf{Y}|f_{\mathbf{X}}) \log \frac{L(\mathbf{Y}|f_{\mathbf{X}})}{L(\mathbf{Y})} d\mathbf{Y} \\
        =&\mathbb{E}_{p(f_{\mathbf{X}}|\mathcal{D})} [D_{\text{KL}}[L(\mathbf{Y}|f_{\mathbf{X}})\Vert L(\mathbf{Y})]] \\
        =&I(f_{\mathbf{X}};\mathbf{Y}|\mathcal{D}).
    \end{split}
\end{equation}
\end{proof}

\section{Re-interpretation of GP-UCB}\label{appx:GP-UCB}

The reinterpretations of most of the acquisition strategies in Table~\ref{tab: unified acquisition strategies} are straightforward according to their definitions. 
However, placing a rather intuitive GP-UCB strategy within this framework seems implicit. 
To reveal their connection, we resort to Lemma~\ref{lemma:mutual info}. Then if we assume constant Gaussian noises $\mathcal{N}\sim(0,\sigma^{2})$ for the observations, we have 
\begin{equation*}
    \begin{split}
        I(f_{x};y|\mathcal{D}_{t})&=h(y\mid \mathcal{D}_{t})-H(y|f_{x},\mathcal{D}_{t}) \\
        %=\frac{1}{2}\log (2 \pi e (\sigma_{t}^{2}(x)+\sigma^{2})) - \frac{1}{2}\log (2 \pi e \sigma^{2}) \\
        &=\frac{1}{2}\log(1+\sigma^{-2}\sigma_{t}^{2}(x)),
    \end{split}
\end{equation*}
where $\sigma_{t}^{2}(x)$ is the variance evaluated on the GP model $p(f_{x}|\mathcal{D}_{t})$.

When further assuming that the GP kernel $\kappa_{t}(x, x')\le 1, \forall x, x' \in \mathcal{X}$, then $0\le \sigma_{t}^{2}(x)\le \kappa_{t}(x, x') \le 1$, which gives
\begin{equation*}
    \log(1+\sigma^{-2}\sigma_{t}^{2}(x)) \ge \log(1+\sigma^{-2})\sigma_{t}^{2}(x).
\end{equation*}

If we choose $\beta = \frac{1}{2}\log(1+\sigma^{-2})$, then the epistemic term in PraC, \ie,  $I(f_{x};y|\mathcal{D}_{t})$, provides an upper bound of the square of the exploration term $\beta^{1/2}\sigma_{t}(x)$ in GP-UCB. 

This reveals the close relationship between GP-UCB and AIF, showing that the uncertainty bonus in GP-UCB can be viewed as a first-order surrogate for an information-theoretic epistemic term.

\section{Experimental Details}\label{appendix:Experiment}

This appendix provides a comprehensive overview of the simulation environment, model parameters, and hyper-parameter choices used to generate the results in this paper. The experiments were designed to be reproducible given the configurations outlined below.

\subsection{Simulation Environment}\label{appendix:experiment-setup}

All simulations for the perception failure evaluation in CARLA (Section~\ref{appendix: carla}) were conducted on a Linux workstation with Ubuntu 22.04 LTS equipped with an Intel 13th Gen Core i7-13700KF CPU (16 cores, 24 threads, up to 5.4 GHz) and an NVIDIA GeForce RTX 4090 GPU (24 GB VRAM). The system ran CARLA simulations using CUDA 12.2 and NVIDIA driver version 535.230.02. 

All other experiments were run on a MacBook Pro equipped with an Apple M2 Pro processor (10-core CPU, 16-core GPU) and a 3024 × 1964 Retina display. The GPU supports Metal 3, and the system was used as-is without external accelerators. 

All experiments were conducted using Python 3.9. The core scientific computing libraries utilized were:
\begin{itemize}
    \item BoTorch~\citep{Balandat2020BoTorch:Optimization} 
    \item GPyTorch~\citep{Gardner2018GPyTorch:Acceleration}
\end{itemize}
In all experiments, we utilized the built-in Monte Carlo sampler in Botorch for the optimization of acquisition functions. The Monte Carlo samples are drawn from the posteriors for each model to approximate the expectations of acquisition functions.

\subsection{Decision-Oriented Plume Monitoring}
\label{appendix:decision_source_seeking}

This appendix provides implementation details and additional analysis for the decision-oriented plume-monitoring experiments in Section~\ref{subsec:decision_source_seeking}. 
All three tasks share the same sequential decision structure. 
The agent maintains a posterior distribution over latent environmental hypotheses $\theta$, selects one sensor query $x$ at each iteration, observes a plume count $y$, updates the posterior, and evaluates the downstream decision induced by the updated belief.

\subsubsection{Plume Field Model}

We consider the monitoring of a chemical plume field, where multiple plume sources generate plume particles that can be measured by sensors. 
The field function is represented by the rate of hits, defined as the average number of particles per unit time measured by the sensor at a certain location.

The rate of hits for a chemical plume source is given as:
\begin{equation*}
\begin{split}
    R_{\theta}(x)=\frac{R_{s}}{\log \frac{\gamma}{a}}&\exp(-\frac{\langle \theta-x,V \rangle}{2D})K_{0}(\frac{||\theta-x||_{2}}{\gamma}), \\
\end{split}
\end{equation*}
where $\theta$ is the location of the plume source,  $R_{s}$ is the rate at which the plume source releases the plume particles in the environment, $\gamma=\sqrt{D\tau/(1+\frac{||V||^2\tau}{4D})}$ is the average distance traveled by a plume particle in its lifetime, $a$ is the size of the sensor detecting plume particles, $V$ is the average wind velocity, $D$ is the diffusivity of the plume particles, and $K_{0}$ is the Bessel function of zeroth order.

The measurement $y$, \ie, the number of particles measured, is modeled as a Poisson random variable with $R_{\theta}( x)\Delta t$ as the rate parameter, which leads to a likelihood model as
\begin{equation*}
     L_{\theta}(y|x) = \frac{\exp(-R_{\theta}(x)\Delta t) (R_{\theta}(x)\Delta t)^y}{y!},
\end{equation*}
where $\Delta t$ is the time taken to obtain a measurement.

\subsubsection{Task 1: Source Response Localization}

The first task asks the agent to localize a plume source well enough to dispatch a response team near it. 
The true source is located at $(35,65)$. 
The latent hypotheses $\theta$ consist of source-location hypotheses on a grid over the $100\times 100$ domain with spacing $5$, together with source-strength multipliers
$$
[0.4,0.7,1.0,1.6].
$$
The response actions form a coarser grid with spacing $20$. 
The prior is intentionally multimodal, with modes near $(75,25)$ and $(35,65)$, so the agent must resolve a decision-relevant ambiguity rather than simply exploit a unimodal belief.

For a response action $a$ and hypothesis $\theta$, let $\ell_\theta$ denote the source location specified by $\theta$. 
The downstream loss is the normalized squared response distance,
$$
L_{\mathrm{loc}}(a,\theta)
=
\frac{\|a-\ell_\theta\|_2^2}{100^2+100^2}.
$$
Given posterior $q_t(\theta)=p(\theta\mid\mathcal D_t)$, the Bayes risk is
$$
BR(p_t)
=
\min_a
\sum_\theta q_t(\theta)L_{\mathrm{loc}}(a,\theta),
$$
and the Bayes action is
$$
a_t^\star
=
\arg\min_a
\sum_\theta q_t(\theta)L_{\mathrm{loc}}(a,\theta).
$$

For dynamic curiosity schedule, the default dynamic settings are
$$
q=0.75,\quad
k_\beta=1,\quad
\varepsilon_I=10^{-8},\quad
\varepsilon_U=10^{-8},\quad
\beta_{\min}=10^{-3},\quad
\beta_{\max}=10,\quad
U^\star=0.
$$

We report:
\begin{itemize}
    \item \textbf{Bayes risk}: $BR(p_t)$.
    \item \textbf{Response loss}: realized loss of the Bayes action under the true source,
    $$
    L_{\mathrm{loc}}(a_t^\star,\theta_{\mathrm{true}}).
    $$
    \item \textbf{Response success}: whether the selected response action lies within one response-grid spacing of the true source,
    $$
    \mathbf 1\{\|a_t^\star-\ell_{\theta_{\mathrm{true}}}\|_2\le 20\}.
    $$
    \item \textbf{Parameter error}: Euclidean error between the posterior MAP source location and the true source location.
\end{itemize}

\subsubsection{Task 2: Consequence-Weighted Dispatch}

The second task uses the same response-decision structure, but errors in high-consequence regions are more costly. 
The true source is located at $(68,72)$. 
The prior has modes near $(34,66)$, $(68,72)$, and $(76,24)$, and the high-consequence region is centered near the true source.

Let $C(\ell_\theta)$ denote the consequence weight associated with the source location under hypothesis $\theta$. 
In the implementation, this weight is defined by a baseline term plus Gaussian consequence regions,
$$
C(\ell_\theta)
=
C_0
+
\sum_j w_j
\exp
\left(
-\frac{\|\ell_\theta-c_j\|_2^2}{2\sigma_j^2}
\right),
$$
with an additional source-strength multiplier. 
The consequence-weighted downstream loss is
$$
L_{\mathrm{cw}}(a,\theta)
=
C(\ell_\theta)
\frac{\|a-\ell_\theta\|_2^2}{100^2+100^2}.
$$
Given posterior $q_t(\theta)$, the Bayes risk is
$$
BR(p_t)
=
\min_a
\sum_\theta q_t(\theta)L_{\mathrm{cw}}(a,\theta).
$$

For dynamic curiosity schedule, the default dynamic settings are
$$
q=0.75,\quad
k_\beta=1,\quad
\varepsilon_I=10^{-8},\quad
\varepsilon_U=10^{-8},\quad
\beta_{\min}=10^{-3},\quad
\beta_{\max}=10,\quad
U^\star=0.
$$

We report:
\begin{itemize}
    \item \textbf{Bayes risk}: posterior expected consequence-weighted loss of the Bayes action.
    \item \textbf{Response loss}: realized consequence-weighted response loss under the true source.
    \item \textbf{Response success}: the same geometric success criterion as in source response localization.
    \item \textbf{Parameter error}: Euclidean MAP source-location error.
\end{itemize}

\subsubsection{Task 3: Active Source Prioritization}

The third task models limited repair or inspection resources. 
There are six possible sources, and the true active sources are indices $[4,5]$. 
The repair budget is $k=2$. 
The source consequence weights are
$$
[70,65,5,5,180,170],
$$
and the prior activity probabilities are
$$
[0.82,0.80,0.40,0.40,0.15,0.15].
$$
This creates a decision tension: the prior favors high-probability decoys, while the true active sources have high consequence weights but low prior activity.

The posterior is over nonempty active-source subsets $\theta$. 
Let
$$
z_i(\theta)\in\{0,1\}
$$
indicate whether source $i$ is active under hypothesis $\theta$. 
The posterior marginal activity probability of source $i$ is
$$
p_{t,i}
=
\sum_\theta q_t(\theta)z_i(\theta).
$$
With zero false-positive cost, the expected missed-source risk before selecting repairs is
$$
\sum_i w_i p_{t,i}.
$$
If the selected repair set is $S$, its posterior expected benefit is
$$
\sum_{i\in S}w_i p_{t,i}.
$$
The Bayes action selects the top $k$ sources by $w_i p_{t,i}$, and the Bayes risk is
$$
BR(p_t)
=
\sum_i w_i p_{t,i}
-
\max_{|S|=k}
\sum_{i\in S}w_i p_{t,i}.
$$

For dynamic curiosity schedule, the default dynamic settings are
$$
q=0.9,\quad
k_\beta=2,\quad
\varepsilon_I=10^{-8},\quad
\varepsilon_U=10^{-8},\quad
\beta_{\min}=1,\quad
\beta_{\max}=10,\quad
U^\star=0.
$$

We report:
\begin{itemize}
    \item \textbf{Bayes risk}: posterior expected weighted risk left unrepaired after selecting $k$ sources.
    \item \textbf{Missed-source risk}: realized total weight of true active sources not selected,
    $$
    \sum_{i\in A_{\mathrm{true}}\setminus S_t}w_i.
    $$
    \item \textbf{Top-$k$ recall}:
    $$
    \frac{|S_t\cap A_{\mathrm{true}}|}
    {\min(k,|A_{\mathrm{true}}|)}.
    $$
    \item \textbf{Weighted top-$k$ recall}:
    $$
    \frac{\sum_{i\in S_t\cap A_{\mathrm{true}}}w_i}
    {\sum_{i\in A_{\mathrm{true}}}w_i}.
    $$
    \item \textbf{Parameter error}: symmetric-difference size between the MAP active-source set and the true active-source set.
\end{itemize}

\subsubsection{Baselines}

We compare the following methods.

\paragraph{Random.}
Random selects a sensor query uniformly or according to the predefined random sampling protocol over the candidate set.

\paragraph{Pure information gain.}
EIG selects
$$
x_t
=
\arg\max_{x\in\mathcal X_t^{\mathrm{cand}}}
\widehat I_t(x),
$$
and therefore learns the latent environment without regard to downstream decision consequence.

\paragraph{Decision-greedy.}
Greedy selects
$$
x_t
=
\arg\max_{x\in\mathcal X_t^{\mathrm{cand}}}
\widehat{\Delta BR}_t(x),
$$
and therefore maximizes immediate expected downstream Bayes-risk reduction without explicitly valuing epistemic clarification.

\paragraph{Fixed-$\beta$ PraC.}
Fixed PraC uses
$$
\alpha_t(x)
=
\widehat{\Delta BR}_t(x)
+
5\,\widehat I_t(x).
$$

\paragraph{Dynamic-$\beta$ PraC.}
Dynamic PraC uses the same acquisition but computes $\beta_t$ adaptively from the feedforward--feedback schedule described above.

\subsubsection{Analysis of Decision-Oriented Monitoring Results}
\label{appendix:decision_monitoring_analysis}

The decision-oriented plume-monitoring experiments are designed to test whether PraC learns environmental distinctions that matter for downstream action, rather than merely reducing parameter uncertainty. 
The detailed results are reported in Table~\ref{tab:decision_plume_results}.

Across the three tasks, the results show three complementary behaviors: dynamic PraC improves downstream Bayes-risk reduction in the response-localization and source-prioritization tasks; decision-greedy can be competitive when posterior risk is already well aligned with the downstream loss; and pure information gain can fail when uncertainty reduction is not sufficiently targeted toward the decision-relevant ambiguity.

\paragraph{Source response localization.}
In source response localization, dynamic PraC achieves the lowest final Bayes risk, $0.0025\pm 1.59\times 10^{-12}$, and also obtains zero parameter error. 
All non-random adaptive methods achieve the best realized response loss and response success, indicating that the response decision itself is relatively easy once the posterior identifies a response-equivalent source region. 
However, the Bayes-risk and parameter-error results reveal a sharper distinction among methods. 
Dynamic PraC resolves the latent ambiguity most completely, whereas fixed PraC and EIG retain residual MAP source-location error, and Greedy retains larger parameter error. 
This suggests that, in this task, downstream decision improvement and epistemic clarification are aligned, but dynamic curiosity still improves the posterior quality beyond what is required for merely selecting a successful response action.

\paragraph{Consequence-weighted dispatch.}
The consequence-weighted dispatch task is more diagnostic because the posterior Bayes risk and the realized response loss no longer rank methods identically. 
Greedy achieves the lowest final Bayes risk, $0.0254\pm 0.00538$, because it directly optimizes the one-step expected reduction in posterior risk. 
However, dynamic PraC achieves the best realized response loss, $0.0347\pm 0.00771$, and the lowest parameter error, $7.62\pm 5.56$. 
This gap is important: posterior Bayes risk measures expected loss under the agent's current belief, whereas realized response loss evaluates the downstream decision under the true latent hypothesis. 
A purely greedy policy can therefore reduce posterior risk quickly while still failing to collect information that would improve the realized decision under the true source. 
Dynamic PraC sometimes sacrifices immediate posterior-risk reduction to clarify the environmental hypothesis, which improves the eventual response decision in the realized environment.

\paragraph{Active source prioritization.}
Active source prioritization creates a different form of decision ambiguity. 
The prior assigns high activity probabilities to some lower-consequence decoy sources, while the true active sources have lower prior probability but much larger consequence weights. 
Dynamic PraC achieves the lowest final Bayes risk, $2.86\pm 2.72$, improving over fixed PraC, Greedy, EIG, and Random. 
Dynamic and fixed PraC tie on missed-source risk, top-$k$ recall, and weighted top-$k$ recall, showing that both PraC variants reliably identify the high-consequence sources needed for the repair decision. 
By contrast, EIG and Random perform substantially worse on missed-source risk and weighted recall. 
This supports the central point of the decision-oriented formulation: information is useful only when it helps resolve the ambiguity that affects the downstream repair set. 
Pure information gain may reduce uncertainty over the active-source subset, but it does not necessarily prioritize distinctions that change the top-$k$ consequence-weighted decision.

\paragraph{Comparison with pure information gain.}
EIG is a strong baseline when learning the latent hypothesis is well aligned with the downstream decision. 
This explains why it performs competitively in source response localization and matches fixed PraC on several metrics. 
However, EIG lacks an explicit mechanism for distinguishing decision-relevant uncertainty from decision-irrelevant uncertainty. 
In consequence-weighted dispatch and active source prioritization, this limitation becomes more visible: EIG can collect information about the environment without adequately targeting the distinctions that reduce consequence-weighted response loss or missed-source risk. 
PraC addresses this by coupling information gain with expected Bayes-risk reduction.

\paragraph{Comparison with decision-greedy.}
The decision-greedy baseline isolates the pragmatic term by choosing queries that maximize $\widehat{\Delta BR}_t(x)$ without any explicit epistemic value. 
Its strong Bayes-risk performance in consequence-weighted dispatch shows that myopic posterior-risk reduction can be effective when the current belief already points toward useful response decisions. 
However, Greedy is more vulnerable when the current posterior is brittle or biased by the prior. 
In active source prioritization, Greedy is substantially worse than dynamic PraC in final Bayes risk, even though it remains competitive on some top-$k$ metrics. 
This indicates that myopic decision improvement may not be sufficient when the downstream action depends on resolving low-prior but high-consequence hypotheses.

\paragraph{Why Bayes risk and realized loss can disagree.}
The distinction between Bayes risk and realized downstream loss is essential for interpreting these experiments. 
Bayes risk is the expected loss under the agent's posterior belief, while realized loss is evaluated under the true latent hypothesis. 
A method can reduce Bayes risk by becoming confident under its current posterior, even if the posterior remains biased in a way that hurts realized performance. 
This is why Greedy can obtain the lowest final Bayes risk in consequence-weighted dispatch while dynamic PraC obtains the best realized response loss and parameter error. 
PraC's epistemic term helps correct the posterior before committing too strongly to the current decision surrogate.

Overall, the three tasks show that PraC is most beneficial when downstream action depends on resolving specific decision-relevant ambiguity. 
When the pragmatic objective and epistemic objective are aligned, PraC remains competitive with EIG and Greedy. 
When they diverge, dynamic PraC provides a mechanism for deciding when information is worth acquiring because it improves downstream action. 
This supports the role of PraC as a hybrid learning-and-optimization framework: the agent does not learn the environment for its own sake, nor does it merely optimize the current posterior decision; it learns the environmental distinctions that make better downstream action possible.

\begin{table*}[t]
\centering
\caption{
Full final performance metrics for decision-oriented plume monitoring over 20 seeds.
Lower is better for Bayes risk, response loss, parameter error, and missed-source risk.
Higher is better for response success, top-$k$ recall, and weighted top-$k$ recall.
Bold indicates the best or tied-best mean within each task and metric.
Dyn. PraC denotes dynamic-$\beta_t$ PraC; Fixed PraC denotes fixed-$\beta$ PraC with $\beta=5$.
}
\label{tab:decision_plume_results}
\scriptsize
\setlength{\tabcolsep}{3pt}
\renewcommand{\arraystretch}{1.12}

\begin{tabularx}{\textwidth}{@{}p{0.22\textwidth}YYYYY@{}}
\toprule
\multicolumn{6}{c}{\textbf{Source Response Localization}} \\
\midrule
Metric 
& Dyn. PraC 
& Fixed PraC 
& Greedy 
& EIG 
& Random \\
\midrule
Bayes risk $\downarrow$
& $\mathbf{0.0025 \pm 1.59{\times}10^{-12}}$
& $0.00377 \pm 0.00254$
& $0.00463 \pm 0.000765$
& $0.00377 \pm 0.00254$
& $0.0124 \pm 0.00495$ \\
Response loss $\downarrow$
& $\mathbf{0.0025 \pm 0}$
& $\mathbf{0.0025 \pm 0}$
& $\mathbf{0.0025 \pm 0}$
& $\mathbf{0.0025 \pm 0}$
& $0.0125 \pm 0.00632$ \\
Response success $\uparrow$
& $\mathbf{1.00 \pm 0}$
& $\mathbf{1.00 \pm 0}$
& $\mathbf{1.00 \pm 0}$
& $\mathbf{1.00 \pm 0}$
& $0.80 \pm 0.40$ \\
Parameter error $\downarrow$
& $\mathbf{0.00 \pm 0}$
& $2.00 \pm 4.00$
& $5.24 \pm 5.25$
& $2.00 \pm 4.00$
& $8.61 \pm 5.09$ \\
\bottomrule
\end{tabularx}

\vspace{0.8em}

\begin{tabularx}{\textwidth}{@{}p{0.22\textwidth}YYYYY@{}}
\toprule
\multicolumn{6}{c}{\textbf{Consequence-Weighted Dispatch}} \\
\midrule
Metric 
& Dyn. PraC 
& Fixed PraC 
& Greedy 
& EIG 
& Random \\
\midrule
Bayes risk $\downarrow$
& $0.0282 \pm 0.00374$
& $0.0398 \pm 0.0125$
& $\mathbf{0.0254 \pm 0.00538}$
& $0.0398 \pm 0.0125$
& $0.0422 \pm 0.0118$ \\
Response loss $\downarrow$
& $\mathbf{0.0347 \pm 0.00771}$
& $0.0424 \pm 0.00945$
& $0.0424 \pm 0.00945$
& $0.0424 \pm 0.00945$
& $0.0463 \pm 0.0144$ \\
Response success $\uparrow$
& $\mathbf{1.00 \pm 0}$
& $\mathbf{1.00 \pm 0}$
& $\mathbf{1.00 \pm 0}$
& $\mathbf{1.00 \pm 0}$
& $\mathbf{1.00 \pm 0}$ \\
Parameter error $\downarrow$
& $\mathbf{7.62 \pm 5.56}$
& $8.86 \pm 6.60$
& $8.04 \pm 3.06$
& $8.86 \pm 6.60$
& $17.8 \pm 9.72$ \\
\bottomrule
\end{tabularx}

\vspace{0.8em}

\begin{tabularx}{\textwidth}{@{}p{0.22\textwidth}YYYYY@{}}
\toprule
\multicolumn{6}{c}{\textbf{Active Source Prioritization}} \\
\midrule
Metric 
& Dyn. PraC 
& Fixed PraC 
& Greedy 
& EIG 
& Random \\
\midrule
Bayes risk $\downarrow$
& $\mathbf{2.86 \pm 2.72}$
& $3.59 \pm 2.70$
& $14.0 \pm 10.6$
& $23.3 \pm 19.1$
& $34.5 \pm 17.0$ \\
Missed-source risk $\downarrow$
& $\mathbf{34.0 \pm 68.0}$
& $\mathbf{34.0 \pm 68.0}$
& $36.0 \pm 72.0$
& $68.0 \pm 83.3$
& $316 \pm 68.0$ \\
Top-$k$ recall $\uparrow$
& $\mathbf{0.90 \pm 0.20}$
& $\mathbf{0.90 \pm 0.20}$
& $\mathbf{0.90 \pm 0.20}$
& $0.80 \pm 0.245$
& $0.10 \pm 0.20$ \\
Weighted top-$k$ recall $\uparrow$
& $\mathbf{0.903 \pm 0.194}$
& $\mathbf{0.903 \pm 0.194}$
& $0.897 \pm 0.206$
& $0.806 \pm 0.238$
& $0.0971 \pm 0.194$ \\
\bottomrule
\end{tabularx}

\end{table*}

\subsubsection{Diagnostics of Dynamic Curiosity}\label{appendix:dynamic_curiosity}

In addition to the main performance curves, we report scheduler diagnostics to verify that dynamic curiosity behaves as intended. 
The diagnostic plots include:
\begin{itemize}
    \item $\beta_t$, the scheduled curiosity coefficient;
    \item $\beta_t^{\mathrm{ff}}$, the feedforward exchange-rate scale;
    \item $\beta_t^{\mathrm{fb}}$, the feedback activation factor;
    \item $\beta_t\widehat I_t(x_t)$, the selected effective epistemic pressure.
\end{itemize}

\begin{figure*}[t!]
    \centering
    \includegraphics[width=0.95\linewidth]{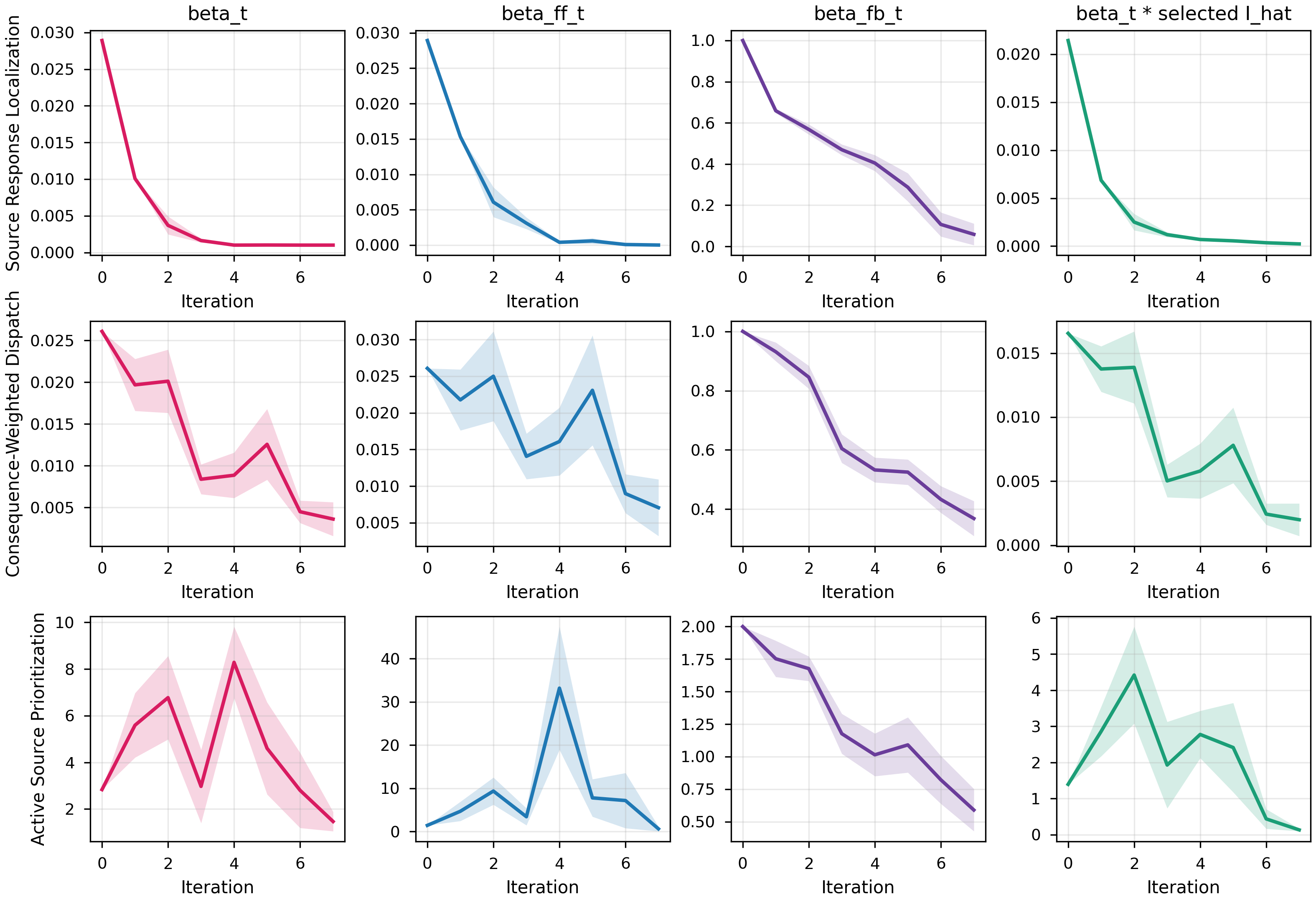}
    \caption{
    Diagnostics of dynamic curiosity scheduler.
    Shaded regions show mean $\pm 1$ standard error over 20 seeds.
    }
    \label{fig:dynamic_scheduler_diagnostics}
\end{figure*}

These diagnostics distinguish three regimes. 
When decision-relevant uncertainty is high, the feedback activation keeps curiosity active. 
When useful information is costly relative to downstream Bayes-risk reduction, the feedforward scale increases. 
When the decision-symbol posterior concentrates, the feedback activation suppresses curiosity even if raw information-gain ratios become numerically large. 
This prevents late-stage ratio explosions and makes the dynamic schedule interpretable.

\paragraph{Role of dynamic curiosity.}
The dynamic schedule gives $\beta_t$ an operational meaning as an exchange rate between epistemic value and downstream Bayes-risk reduction. 
The feedforward scale estimates how large curiosity must be for information gain to compete with pragmatic improvement, while the feedback activation suppresses curiosity as decision-relevant uncertainty decreases. 
This prevents $\beta_t$ from acting as a fixed exploration knob. 
Instead, the relevant quantity is the effective epistemic pressure $\beta_t\widehat I_t(x)$. 
The empirical results are consistent with this interpretation: dynamic PraC improves posterior quality in source response localization, improves realized downstream loss in consequence-weighted dispatch, and achieves the lowest Bayes risk in active source prioritization.

\subsection{Targeted Active Search}\label{appendix: carla}

\subsubsection{Perception in self driving simulation CARLA}

We consider the failure discovery for YOLO-based object detection~\citep{Jiang2022ADevelopments,Redmon2018YOLOv3:Improvement} in the CARLA simulator~\citep{Dosovitskiy2017CARLA:Simulator}.

This requires the generation of various scenarios in the environment using CARLA simulator.  
The environment is a composition of a static context and scenario $\phi$. We use the probabilistic programming framework Scenic~\citep{Fremont2019Scenic:Generation} for sampling scenarios with varying contextual information for a fixed scenario variable $\phi$. For the simulations presented in this paper, we used a publically available, pre-existing environment~\citep{Dreossi2019Verifai:Systems}, which consists of the ego vehicle maneuvering on the road with two non-ego agents-- two non-ego cars and a pedestrian crossing the road. We use YOLO object detection model to detect all non-ego agents in a scene. The generated scenario is seeded for reproducibility, so that for a given scenario $\phi$, the environment can be treated as a deterministic quantity. Each scenario is defined using $\phi= [b_e,b_l,s]$, where $b_e,b_l \in [5,15]$ represent the braking threshold of the ego car and lead car (non-ego car in-front of ego car) ($\text{m}$) and $s \in [0,\pi/2]$ denotes the sun altitude angle (rad). Each of these quantities is normalized to be within $[0,1]$, and the normalized scenario is chosen as the decision variable $x$ for active inference.

We are interested in environment variables (scenarios) that lead to two specific types of failures-- failure to detect non-ego agents due to large distance from ego vehicle (Failure 1), and failure to detect non-ego agents due to poor scene lighting (Failure 2).
Fig.~\ref{fig:example-carla} shows examples of the discussed object detection failures we aim to discover.
 
\begin{figure}
    \centering\includegraphics[width=0.3\linewidth]{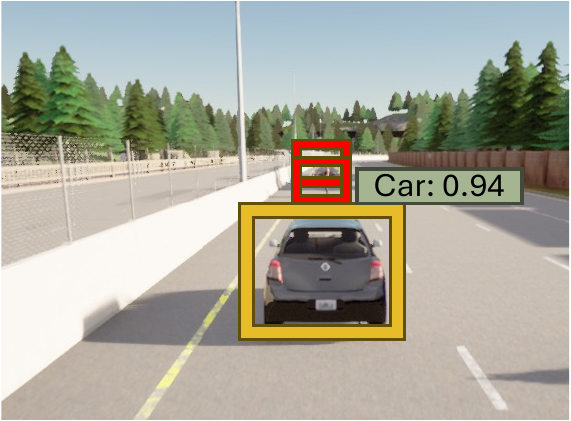}  \includegraphics[width=0.3\linewidth]{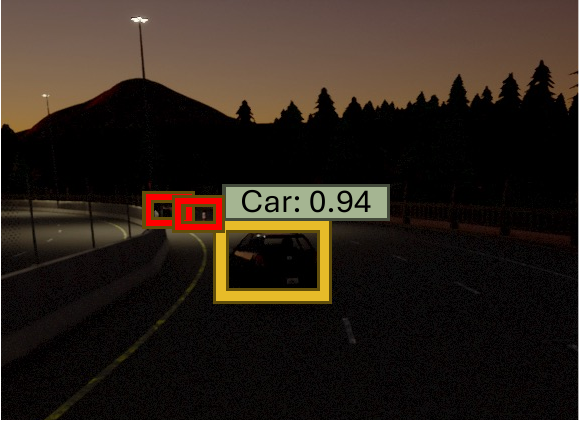}
    \includegraphics[width=0.3\linewidth]{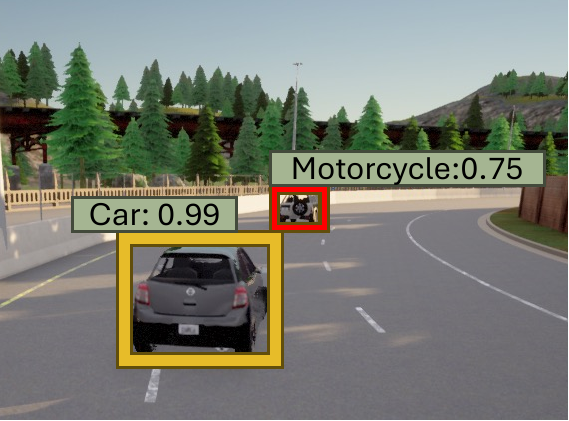}

    \caption{Examples of missed object detection by YOLO due to two reasons considered in perception failure case study in Section~\ref{appendix: carla}. Fig (left to right): example of Failure-1 (distance),  Failure-2 (poor light) and Failure-1 and Failure-2 both in one scene (distance and poor light), respectively. Bounding boxes for detected objects (misdetections) shown in yellow (red) with detection confidence numbers. Each scene has two cars and a pedestrian.}
    \label{fig:example-carla}
\end{figure}

\subsubsection{LLM-based evaluation for CARLA}

We use LLM to perform evaluations for whether a generated scene corresponds to a failure due to specific type.

Each evaluation for a specific scenario corresponds to $T=60$ steps of simulations. Images recorded from the camera view are used for object detection and classification at every $10$ steps using YOLO-v3~\citep{Redmon2018YOLOv3:Improvement}, and the classified image are used as inputs to GiT~\citep{Wang2022GIT:Language} to predict a likely failure type based on fine-tuning data. 

Results obtained from YOLO along-with the reports from GiT are used as inputs to GPT3 model for failure evaluation, which is queried $6$ times per evaluation, and assigns binary scores pertaining to each failure mode for each scene (camera image). The average value reported across 6 scenes is used to construct $c_i:\mathcal{Z}\to\mathbb{R}$ for $i=1,2$ as:
$
         c_i(z) = 
     \frac{1}{T}\sum_{t=1}^Tb^i_t.
$
Here $b^i_t\in\{0,1\}$ is a scene-specific binary evaluation provided by the LLM based on report generated by GiT to assess if an object detection failure is observed in a given scene and corresponds to Failure-$i$. 

We use GPT-3.5 Turbo model for LLM-based binary evaluations, with each evaluation we query the LLM 6 times and combine the binary evaluations for all 6 runs. Note that usage of LLM is not a core part of our methodology and is only used to as a subjective evaluator. 

We show the prompt used for failure evaluation of each scene using the LLM in the box below. Information shown in \textcolor{red}{red} and \textcolor{blue}{blue} is obtained from GiT captioning system, and CARLA respectively. The output of the LLM is used to obtain a binary number for each scene which is composed to give a scenario specific cost function $c_1,c_2$. 

\subsubsection{Task-Specific Configurations}

\begin{itemize}
    \item Goal: We consider two cost functions $c_1,c_2$ associated with each type of failure. The goal is to sample from the set $\Omega = \{z|c_1(z)\geq C_1,c_2(z)\geq C_2\}$, and we consider three target sets defined by $C_1=C_2=0.1$, \\$C_1=C_2=0.5$, and \\$C_1=C_2=0.8$. 
    \item Degree of curiosity: $\beta = 20.0$.
\end{itemize}

\begin{tcolorbox}[title=Prompt Used for CARLA evaluation, colback=gray!5!white, colframe=gray!75!black]
You will be provides the analysis of YOLO Object detection on an image that was taken from the camera feed of CARLA simulator. The simulator is simulating a pedestrian crossing the road before a car in front of the ego car. There are two cars and one pedestrian in each image. 

The information provided:
\begin{enumerate}
    \item \textit{Objects detected}: List of objects detected by YOLO in the image. This list should have atleast one object from the\textit{ Objects to detect} list
    \item \textit{Objects to detect} list:
    \begin{itemize}
        \item  One object with one of the following labels: `car',`truck',
        \item One object with one of the following labels: `car', `truck',`bus',`motorcycle',`bicycle'
         \item One object with one of the following labels: `person'
    \end{itemize}
    \item Reason: The reason is a brief explanation of the failure to detect all objects, if that happens, and is generated by a pre-trained GiT model in the form of captions for the image. 
\end{enumerate}
 We are looking to discover images where YOLO fails to detect an object due to \textbf{bad light} and/or l\textbf{arge distance.}  
 If the list of \textit{objects detected} has an object missing from the \textit{objects to detect} list, look at the reason. The reason can have other components as well, but it can `only' be considered as \textbf{bad light} if at least one of the objects was failed to detect strictly due to \textbf{bad light}. Similarly, the reason can have other components as well, but it can `only' be considered as \textbf{large distance} if the reason contains the phrase `far away'. Follow the response instructions while responding. 

Response Instructions: Respond should be an integer 0, 1, 2, 3 or 4:
\begin{itemize}
    \item 0 indicating that at least one object was missing from the 'objects to detect' list, but the reason provided does not correspond to bad light or large distance.  
    \item 1 indicating that an object was not detected and the reason provided corresponds to bad light only. 
    \item   2 indicating that an object was not detected, and the reason corresponds to large distance only. 
    \item  3 indicating that an object was not detected, and the reason corresponds to both large distance and bad light.
    \item 4 indicating all objects are detected. Do not provide explanation. 
    
\end{itemize}                            
Response format: Response:  [integer], where integer = 0,1,2,3,4.                            
                          
The list of objects detected and reason for incomplete detection for the image are as follows:
\begin{itemize}
    \item Objects detected: \textcolor{blue}{\{objects\}}
    \item Reason: \textcolor{red}{\{reason\}}
\end{itemize}                  
\end{tcolorbox}
\label{prompt}

\subsection{Composite Bayesian Optimization}\label{appendix:power grid}

\subsubsection{Preference Evaluation}
We simulate human-in-the-loop or policy-driven decision-making via pairwise preference queries. That is, for selected pairs of outcomes $(y_{1},y_{2})$, a preference function indicates which design is preferred. These preferences are generated based on a latent utility function, not revealed to the optimizer.

An initial set of 1 pairwise preferences is randomly sampled to initialize the model.
Each step of the optimization selects new pairs to query, guided by the used acquisition strategy.

\subsubsection{Task-Specific Configurations}

\textbf{Vehicle Safety.} 
\begin{itemize}
    \item Goal: Optimize vehicle crash-worthiness. 
    \item Testbed: See \citet{Tanabe2020AnSuite} for details.
    \item Ground Truth: $g(y)=(y-y^{\ast})^{2}$, where $y^{\ast}=[1864.7202, 11.8199, 0.2904]$.
    \item Degree of curiosity: $\beta = \gamma = 1.0$.
\end{itemize}

\textbf{Penicillin.} 
\begin{itemize}
    \item Goal: Maximize the penicillin yield while minimizing time to ferment and the CO2 byproduct. 
    \item Testbed: See \citet{Liang2021ScalableProduction} for details.
    \item Ground Truth: $g(y)=(y-y^{\ast})^{2}$, where $y^{\ast}=[25.935, 57.612, 935.5]$.
    \item Degree of curiosity: $\beta = \gamma = 1.0$.
\end{itemize}

\textbf{Energy Resource Allocation.}
\begin{itemize}
    \item Goal: Identify deployment strategies for Distributed Energy Resources (DERs) in Optimal Power Flow (OPF) that align with implicit ethical preferences across multiple performance dimensions detailed in the following table. 
    \item Testbed: IEEE 30-bus network in pandapower library.
    \item Ground Truth Preference: $g(y)=a^{\intercal}y$, where $a=[-1,1,-2,-1]$
    \item Degree of curiosity: $\beta = \gamma = 1.0$. 
\end{itemize}

\begin{table}[htbp]\label{tab:opf}
%\vskip 0.15in
\begin{center}
\begin{small}
\begin{tabular}{l p{10cm}}
\toprule
Performance Metrics & Definition  \\
\midrule
Voltage Fairness & Measures the variance in bus voltages across the network; lower variance implies more equitable voltage delivery. \\
\hline
Total Cost & Combines capital expenditures for DER installation and operational costs related to reactive power support. \\
\hline
Priority Area Coverage & Quantifies the share of power delivered to high-priority buses, such as rural or underserved regions. \\
\hline
Resilience & Assesses the percentage of time that all bus voltages remain within safe operating limits under perturbations (\eg, load uncertainty or line outages). \\
\bottomrule
\end{tabular}
\end{small}
\end{center}
%\vskip -0.15in
\end{table}

\subsubsection{Comparison with BOPE}\label{appendix:aif_vs_bope}

To highlight the benefits of jointly learning and optimizing, rather than separating these into stages, we extend the baseline comparison with \textbf{BOPE} from~\citet{Lin2022PreferenceOutcomes}. 

The original BOPE framework is intentionally flexible and leaves many problem-specific design choices open, especially regarding how and when to switch between preference exploration and experimentation. In our comparison, we consider four representative stage-wise variants:
\begin{itemize}
    \item \textbf{BOPE-I}: A two-phase strategy that starts with qEUBO (preference-focused exploration) and switches to qNEI in the second half (objective-driven refinement), illustrating the effect of premature exploitation.
    \item \textbf{BOPE-II}: A two-phase strategy that starts with qNEI (exploring the objective space) and switches to qEUBO in the second half (exploiting the learned preference model), using a frozen outcome snapshot for qEUBO.
    \item \textbf{BOPE-III}: A qEUBO-only variant where experiments are selected by qEUBO with a newly sampled objective realization at each iteration, encouraging stronger exploration through objective variation.
    \item \textbf{BOPE-IV}: A BOPE variant that uses standard preference exploration (qEUBO) and selects experiments exclusively with qNEI, fully refitting both outcome and preference GPs after each update. 
\end{itemize}

\begin{figure*}[t!]
    \centering
    \includegraphics[width=0.5\linewidth]{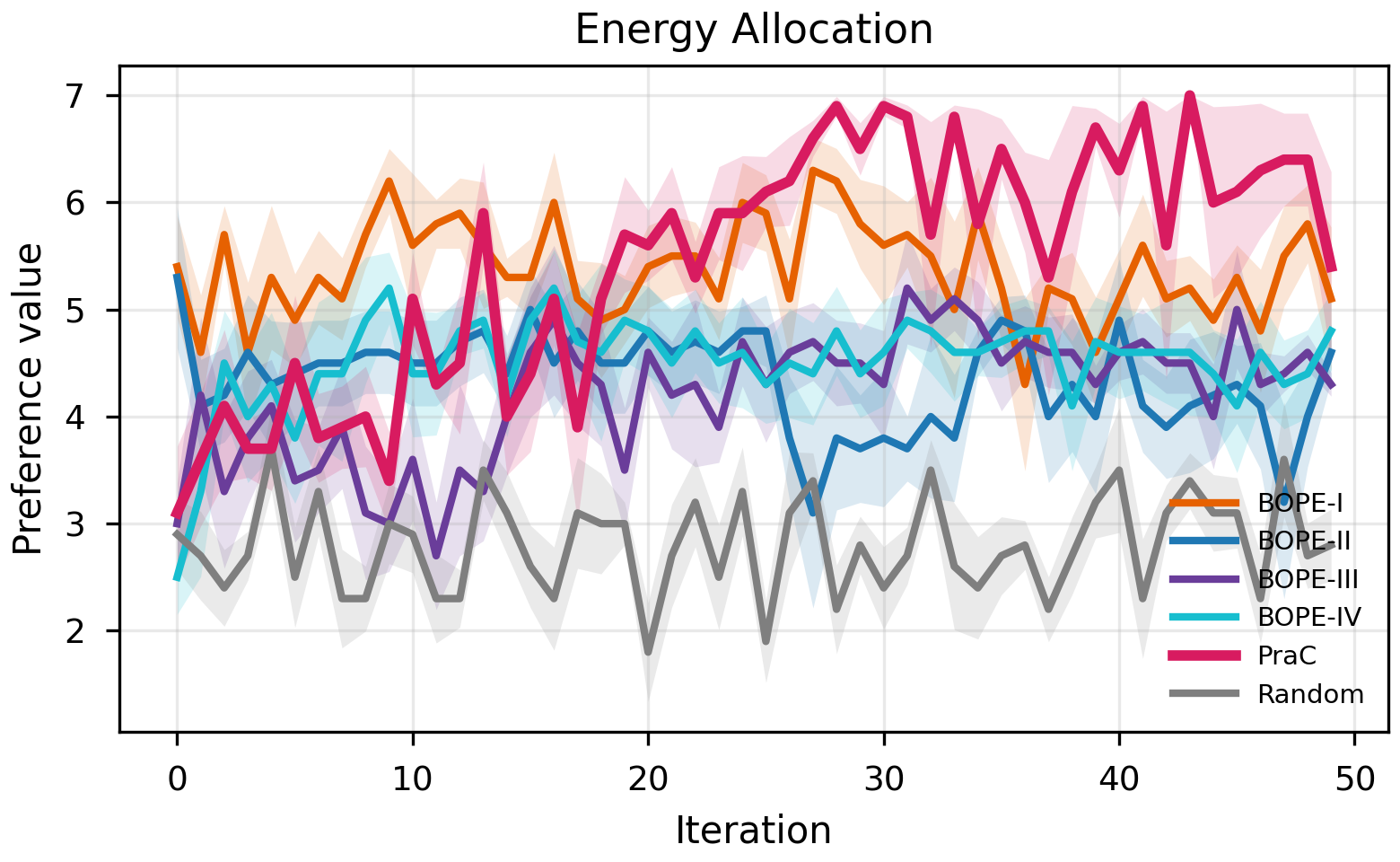}
    \caption{Extended baseline comparison with BOPE for energy allocation. Error bars represent $\pm 1$ standard deviation over 20 seeds.}
    \label{fig:aif_vs_bope}
\end{figure*}

Figure~\ref{fig:aif_vs_bope} reports their preference scores over random sampling (\textbf{RS}). It is evident that our method (\textbf{PraC}) consistently discovers higher-preference regions after a brief initial exploration phase, while BOPE variants are highly sensitive to their stage-wise design choices. \textbf{BOPE-I}, being preference-driven in the first phase, initially attains higher preference values but fails to balance exploration and exploitation, leading to poor convergence in the second half; its performance is also sensitive to the precise switching point. \textbf{BOPE-II} explores first and then optimizes, achieving better final performance than BOPE-I, but the strict separation between exploration and exploitation still yields suboptimal outcomes. \textbf{BOPE-III} mixes both aspects but remains more exploitation-centric, performing better than BOPE-I/II yet still below PraC. \textbf{BOPE-IV} and PraC share the idea of refitting both models at each step, but BOPE-IV converges to a lower-preference solution. In contrast, our acquisition strategy jointly leverages information from both the outcome and preference models at every iteration, leading to higher sample efficiency and more reliable discovery of high-preference regions.

\end{document}

%% file: math_commands.tex
\usepackage{amsmath,amsfonts,bm}

\def\eqref#1{equation~\ref{#1}}
\def\1{\bm{1}}

\DeclareMathAlphabet{\mathsfit}{\encodingdefault}{\sfdefault}{m}{sl}
\SetMathAlphabet{\mathsfit}{bold}{\encodingdefault}{\sfdefault}{bx}{n}